\documentclass{article}

\usepackage{microtype}
\usepackage{graphicx}
\usepackage{subfigure}
\usepackage{booktabs} 
\usepackage{multirow}   
\usepackage{colortbl}   
\usepackage{xcolor}     
\usepackage{makecell}   
\usepackage{enumitem}

\usepackage{amsmath,amssymb,amsthm,mathtools,mathrsfs}
\usepackage{bm}

\usepackage{hyperref}

\usepackage[accepted]{icml2026}

\usepackage{chngcntr}
\AtBeginDocument{%
  \counterwithout{table}{section}%
  \renewcommand{\thetable}{\arabic{table}}%
}

\usepackage{amsmath}
\usepackage{amssymb}
\usepackage{mathtools}
\usepackage{amsthm}

\usepackage[capitalize,noabbrev]{cleveref}

\theoremstyle{plain}
\newtheorem{theorem}{Theorem}[section]

\theoremstyle{definition}
\newtheorem{definition}[theorem]{Definition}

\theoremstyle{remark}
\newtheorem{remark}[theorem]{Remark}

\usepackage[textsize=tiny]{todonotes}

\icmltitlerunning{LOTTERY: Learning from Reference-Only Samples in Two-Sample Testing under Size Asymmetry}

\begin{document}

\twocolumn[
\icmltitle{LOTTERY: Learning from Reference-Only Samples in \\
Two-Sample Testing under Size Asymmetry}



\icmlsetsymbol{equal}{*}

\begin{icmlauthorlist}
\icmlauthor{Xunye Tian}{equal,uom,mc}
\icmlauthor{Zhijian Zhou}{equal,uom}
\icmlauthor{Liuhua Peng}{uom}
\icmlauthor{Feng Liu}{uom}
\end{icmlauthorlist}

\icmlaffiliation{uom}{University of Melbourne}
\icmlaffiliation{mc}{Maincode}

\icmlcorrespondingauthor{Feng Liu}{fengliu.ml@gmail.com}

\icmlkeywords{Machine Learning, ICML}

\vskip 0.3in
]



\printAffiliationsAndNotice{\icmlEqualContribution} 

\begin{abstract}

Data-adaptive two-sample testing assesses if two samples come from the same distribution, using a discrepancy learned from the data (e.g., via kernel-based feature representations). Such methods typically rely on data splitting to decouple learning from testing and control type I error. However, this paradigm is ill-suited to few-shot settings with severe sample-size imbalance: abundant reference samples are available, while only a handful of query samples arrive. In this paper, we show how this imbalance can be leveraged constructively. Using abundant reference data, we learn reference-dependent representations that summarize salient structure of the reference distribution and provide informative \emph{signals} for detecting departures. We incorporate a collection of representation families that capture both global and local structure, and adaptively weight them using only reference samples via an uncertainty-guided principle. Theoretically, we establish permutation-based type I error control and show consistency of the aggregated test: as the sample sizes grow, the test power converges to one whenever the representation set contains at least one consistent representation. Empirically, our aggregation achieves strong performance across a range of benchmarks while retaining type I error control. The code of our LOTTERY is available at \url{github.com/yeager20001118/lottery-two-sample-testing-paper}

\end{abstract}

\section{Introduction}
Two-sample testing is a core primitive in modern machine learning, underpinning a wide range of real-world applications where one must determine whether newly observed data follow the same distribution. Prominent examples include dataset shift detection and model monitoring \citep{Rabanser:Gunnemann:Lipton2019}, adversarial and out-of-distribution detection in vision systems \citep{Grosse:Manoharan:Papernot:Backes:McDaniel2017, Liu:Zhao:Guo2025}, machine-generated text detection and content authenticity verification \citep{Zhang:Song:Yang:Li:Han:Tan2024}, as well as membership-inference attack detection \citep{Li:Li:Ribeiro2021}. More broadly, two-sample testing is a foundational tool for trustworthy ML, enabling practitioners to audit model behaviour, detect distributional violations, and guard against silent failures arising from distribution drift, privacy leakage, or malicious manipulation \citep{Szegedy:Zaremba:Stutskever:Bruna:Erhan:Goodfellow:Fergus2014, Han:Yao:Liu:Li:Koyejo:Liu2025}. In many real-world pipelines, users maintain a large historical reference dataset presumed to reflect nominal operation, and must repeatedly assess whether a newly arriving batch of query samples continues to follow the same distribution.

\textbf{Related works.} Most nonparametric two-sample tests are based on kernel methods, where \emph{maximum mean discrepancy} MMD compares kernel mean embeddings of the two samples with a fixed kernel bandwidth \citep{Berlinet:Thomas-Agnan2004,Gretton:Borgwardt:Rasch:Scholkopf:Smola2012}. Modern variants often improve test power by \emph{aggregating} MMD statistics over multiple kernel bandwidths (e.g., MMDAgg or MMD-FUSE), yielding strong power across diverse alternatives while maintaining type I error control \citep{Schrab:Kim:Albert:Laurent:Guedj:Gretton2023, Biggs:Schrab:Gretton2023}. While these kernel-based tests can be effective with well-chosen kernels, selecting a representation that is sensitive to the unknown alternative becomes challenging in complex, high-dimensional settings. This motivates \emph{learning-based} two-sample testing, where one adapts features, kernels, or representations directly from data to improve power, while maintaining valid type I error control via resampling-based calibration \citep{Lopez-Paz:Oquab2017,Sutherland:Strathmann:De:Ramdas:Smola:Scholkopf2017,Liu:Xu:Lu:Zhang:Gretton:Sutherland2020}. A common thread across these methods is an explicit \emph{train--test split} to ensure valid type I error when the discrepancy is adapted to the data.


\textbf{The size-asymmetry and few-shot bottleneck.} Despite their empirical success in balanced-sample settings, existing learning-based tests are ill-suited to a ubiquitous monitoring scenario where query data arrive in \emph{extremely small batches} \citep{Gao:Liu:Zhang:Han:Liu:Niu:Sugiyama2021,Li:Li:Ribeiro2021}. In practice, one may accumulate abundant reference data over time, yet only observe a handful of query points per test, due to latency constraints, rare-event settings, or limited observation windows. In this scenario, a train--test split becomes particularly damaging: using even a few query points for learning yields unstable discrepancies, while leaving too few for testing produces noisy calibration and low power. Crucially, these few-shot limitations cannot be mitigated simply by collecting more reference data, since the fundamental bottleneck is the scarcity of query points. Moreover, the information contained in the abundant reference set is often underutilized by standard learning-based pipelines, since adaptivity is typically driven by learning from the pooled batch rather than leveraging the full reference sample \citep{Liu:Xu:Lu:Zhang:Gretton:Sutherland2020}.

This paper asks: \emph{how can we retain the adaptivity of learning-based testing while avoiding learning from scarce query data?} Our key observation is that, under size asymmetry and few-shot, it is often more natural to treat $Y$ as a \emph{query} for testing only rather than as a dataset for training.

\textbf{A reference-only viewpoint.} We propose to learn from the reference distribution alone: using abundant reference data, we construct \emph{reference-dependent representations} (RDR) that summarize salient global and local structure of $\mathbb P$ and output a scalar compatibility score for any input. Testing then reduces to checking whether the query batch induces significantly different scores than reference data. This one-sided perspective is inspired by, and empirically supported by, the success of one-class classification and anomaly detection: large amount of works show that learning features from a single (in-distribution) class can yield representations in which out-of-distribution points become easier to separate and detect \citep{Tax:Duin2004, Ruff:Vandermeulen:Goernitz:Deecke:Siddiqui:Binder:Muller:Kloft2018, Goyal:Raghunathan:Jain:Simhadri:Jain2020, Liznerski:Ruff:Vandermeulen:Franks:Kloft:Muller2021, Zong:Song:Min:Cheng:Lumezanu:Cho:Chen2018, Sehway:Chiang:Mittal2021, Ruff:Kauffmann:Vandermeulen:Montavon:Samke:Kloft:Dietterich:Muller2021, Chen:Tian:Pang:Carneiro2022}.

However, a single reference-dependent representation cannot be uniformly powerful against heterogeneous alternatives.
To address this, we construct a \emph{family} of complementary RDRs that probe multiple aspects of $\mathbb P$, including both global and local structure. We then aggregate their evidence while maintaining nonparametric validity.

\textbf{LOTTERY.} We introduce \textbf{LOTTERY} (\emph{Learning from reference-Only samples in Two-sample Testing under sizE asymmetRY}), a framework that (i) learns multiple complementary RDRs using only reference data, (ii) aggregate and weighted select them into a single test statistic, and (iii) performs a pooled permutation test that yields exact finite-sample type I error control. At a high level, \textsc{LOTTERY} replaces the usual ``learn a discrepancy between $X$ and $Y$'' paradigm with ``learn a collection of compatibility signals from $X$ and test $Y$ against them''.

\textbf{Contributions.} We make three main contributions. First, we introduce a reference-only framework for two-sample testing in the \emph{highly size-asymmetric, few-shot regime}, where abundant reference samples are available but only small query batches arrive at test time. Second, we propose an uncertainty-guided aggregation procedure that adaptively weights reference-dependent representations, \emph{prioritizing stable and informative signals} for detecting departures from the reference distribution. Third, we provide finite-sample type I error control and establish consistency of test power, and empirically demonstrate strong performance across synthetic benchmarks and real-world application datasets.


\textbf{Overview.}
Section~\ref{sec:preliminaries} reviews asymmetric two-sample testing and the limitations of split-based data-adaptive tests. Section~\ref{sec_motivation} highlights the size-asymmetry bottleneck and motivates reference-only learning. Section~\ref{sec:lottery} introduces LOTTERY, including its RDR families. Section~\ref{sec:uncertainty} presents our uncertainty-guided weighting/selection mechanism. Section~\ref{sec:theory} provides theoretical guarantees. Section~\ref{sec:experiments} reports experimental results. Appendix \ref{app:proof} and \ref{app:exp_details} contain theoretical proofs and experimental details.

\section{Preliminaries} \label{sec:preliminaries}


\textbf{Asymmetric Two-sample Testing.}
Given two unknown distributions $\mathbb{P}$ and $\mathbb{Q}$ on $(\mathcal{X},\mathscr{A}(\mathcal{X}))$, we observe a reference sample $X=\{\bm{x}_1,\ldots,\bm{x}_n\}\stackrel{\mathrm{i.i.d.}}{\sim}\mathbb{P}$ and a query sample $Y=\{\bm{y}_1,\ldots,\bm{y}_m\}\stackrel{\mathrm{i.i.d.}}{\sim}\mathbb{Q}$. We focus on the \emph{sample-size asymmetric} regime, where the reference distribution is well sampled (large $n$) but the query batch is limited (small $m$), as commonly arises in monitoring and detection applications. The goal is to test
\[
H_0:\mathbb{P}=\mathbb{Q}
\quad\text{vs.}\quad
H_1:\mathbb{P}\neq\mathbb{Q}.
\]
A two-sample test is a measurable function $\phi:\mathcal{X}^n\times\mathcal{X}^m\to\{0,1\}$, where $\phi(X,Y)=1$ denotes \emph{rejecting} $H_0$ and $\phi(X,Y)=0$ denotes \emph{not rejecting} $H_0$. The test controls the type I error at level $\alpha\in(0,1)$ if, under the null hypothesis,
\[
\mathbb{P}\bigl(\phi(X,Y)=1\bigr)\le \alpha.
\]
For a fixed alternative distribution pair $(\mathbb{P},\mathbb{Q})$ with $\mathbb{P}\neq\mathbb{Q}$, the (finite-sample) power of $\phi$ is
\[
\mathbb{P}\bigl(\phi(X,Y)=1\bigr),
\]
i.e., the probability of correctly rejecting $H_0$ under the alternative hypothesis $H_1$.

\textbf{Data-Adaptive Testing with Train-Test Splitting.}
A standard paradigm in data-adaptive two-sample testing is to decouple representation learning from hypothesis testing via an explicit train-test split. Given the pooled sample $Z=X\cup Y$, one constructs disjoint subsets $Z^{\mathrm{tr}}=X^{\mathrm{tr}}\cup Y^{\mathrm{tr}}$ and $Z^{\mathrm{te}}=X^{\mathrm{te}}\cup Y^{\mathrm{te}}$. A representation mapping $f$ is learned on $Z^{\mathrm{tr}}$ to increase a data-driven measure of discrepancy between the two samples (e.g., by maximizing a chosen objective or training a classifier to separate them) \citep{Lopez-Paz:Oquab2017,Liu:Xu:Lu:Zhang:Gretton:Sutherland2020,Kubler:Jitkrittum:Scholkopf:Muandet2022,Tian:Peng:Zhou:Gong:Gretton:Liu2025}. The learned mapping is then evaluated on $Z^{\mathrm{te}}$ to construct a test statistic, whose significance is assessed by permutation testing. This train-test separation is crucial for valid type I error control, as it prevents using the same data both to adaptively choose the representation and to evaluate its evidence against the null.

\section{Motivation}
\label{sec_motivation}
In this section, we introduce the main challenges of current data-adaptive two-sample testing framework under the imbalanced size case of two samples.

\subsection{Limitations of Train-test Splitting}
Building on the train--test split paradigm described in Section \ref{sec:preliminaries}, we show that this
design becomes problematic in the severely imbalanced two-sample regime.


When $m=|Y|$ is limited, further splitting the query sample yields extremely small
$Y^{\mathrm{tr}}$ and $Y^{\mathrm{te}}$, so the learned representations are often
dominated by sampling noise. Learning from such small training subsets leads to high-variance and unstable feature extractors, a phenomenon consistent with classical generalization bounds and empirical observations in low-sample regimes
\citep{ShalevShwartz:BenDavid2014, Zhang:Bengio:Hardt:Recht:Vinyals2017}. 
Moreover, allocating only a handful of samples to the testing phase makes the
null-based thresholding highly variable: when $m$ is small, the test statistic
exhibits large sampling fluctuations under $H_0$, so the resulting rejection
threshold (or $p$-value) becomes noisy, which directly reduces power
\citep{Good2005}.
These limitations are intrinsic to split-based procedures and cannot be mitigated
by increasing the size of the abundant reference sample alone.
As a consequence, existing data-adaptive two-sample tests that rely on explicit
train--test splitting may suffer from substantial power loss or unstable behavior
in realistic scenarios where only a few query observations are available
\citep{Liu:Xu:Lu:Zhang:Gretton:Sutherland2020, zhou2025duallearningdiversekernels, zhou2025anchorbasedmaximumdiscrepancyrelative}.

\subsection{Can We Learn from Sufficient Reference Samples}
A natural question arising from the limitations of train--test split-based methods
is whether meaningful representations can be learned without access to query samples.
This question has been extensively studied in the context of \emph{one-class classification}
(OCC), where models are trained exclusively on samples from a single distribution to
characterize its support, geometry, and internal structure. Despite the absence of
explicit negative examples, OCC methods have been empirically shown to learn informative
representations that enable reliable detection of deviations at test time
\citep{Ruff:Vandermeulen:Goernitz:Deecke:Siddiqui:Binder:Muller:Kloft2018, Ruff:Kauffmann:Vandermeulen:Montavon:Samke:Kloft:Dietterich:Muller2021, Chen:Tian:Pang:Carneiro2022}.

This insight directly applies to the size-asymmetric two-sample testing settings
considered in this work. When the reference distribution $\mathbb{P}$ is well-sampled,
its intrinsic structure can be estimated reliably from reference data alone.
Under the null hypothesis $\mathbb{P}=\mathbb{Q}$, query samples are expected to be compatible to these learned representations, whereas under alternatives $\mathbb{P}\neq\mathbb{Q}$, systematic deviations from the reference structure emerge. Crucially, learning representations solely from the reference sample avoids further splitting the already limited query set and eliminates a major source of instability in data-adaptive tests.

Moreover, no single reference-dependent representation can be expected to capture all
possible forms of distributional discrepancy. Deviations between $\mathbb{P}$ and
$\mathbb{Q}$ may manifest through changes in global moments, local density structure,
or neighborhood geometry. This motivates the use of a family of complementary
reference-dependent representations, each probing a distinct aspect of the reference
distribution. Aggregating evidence across such a family enables robust detection of a
broad class of alternatives while preserving the one-sided learning paradigm. These considerations motivate a reference-only learning framework that constructs
and aggregates multiple reference-dependent representations to enable valid and
powerful two-sample testing under size asymmetry while one sample is extremely limited.

\section{Learning from Reference-Only Samples for Two-Sample Testing}
\label{sec:lottery}

Motivated by the limitations of split-based data-adaptive tests and the sufficiency
of reference-only learning in Section~\ref{sec_motivation}, we formalize a one-sided framework for two-sample testing under severe size asymmetry. We consider the regime where the reference sample $X \sim \mathbb{P}$ is abundant while the query sample
$Y \sim \mathbb{Q}$ is limited, a setting commonly encountered in practical applications \citep{Li:Li:Ribeiro2021,Gao:Liu:Zhang:Han:Liu:Niu:Sugiyama2021,
Guille-Escuret:Rodrigues:Vazquiz:Mitliagkas:Monteiro2023, li2025membership}. We propose \emph{Learning from Reference-Only Samples in Two-Sample Testing under Size Asymmetry (LOTTERY, or LOTT)}, a framework that constructs representations using only the abundant reference sample and performs testing by evaluating the compatibility of query samples with these representations. This framework avoids train--test splitting of the query data and enables valid level-$\alpha$ hypothesis testing through pooled permutation. Specifically, we partition the reference sample into three disjoint subsets
$X = X^{\mathrm{tr}} \cup X^{\mathrm{cal}} \cup X^{\mathrm{hold}}$, used respectively
for learning one-sided representations, calibration, and pooled permutation testing.


\subsection{Reference-Dependent Representation}
Instead of directly learning a discrepancy between $\mathbb{P}$ and $\mathbb{Q}$ from a split of the two samples, we take a reference-only perspective. We first learn representations that characterize the reference distribution $\mathbb{P}$ from samples, and then assess how well the query samples from $\mathbb{Q}$ are compatible with this reference. This view is closely related to OCC \citep{Ruff:Vandermeulen:Goernitz:Deecke:Siddiqui:Binder:Muller:Kloft2018}, where a model is trained to capture the support of a single distribution and departures from it are treated as evidence against compatibility.

We formalize this idea through \emph{reference-dependent representations} (RDR) defined as follows.
\begin{definition}
A reference-dependent representation is a function $f:\mathcal{X}\to\mathbb{R}$ learned exclusively from reference samples $X^{\mathrm{tr}}=\{\bm{x}^{\mathrm{tr}}_i\}_{i=1}^{n^{\mathrm{tr}}}\stackrel{i.i.d.}{\sim}\mathbb{P}$, such that larger values of $f(\bm{y})$ indicate that $\bm{y}$ is less compatible with $\mathbb{P}$.
\end{definition}

An RDR maps each input to a scalar score, where larger values indicate lower compatibility with the reference distribution $\mathbb{P}$. Given a small query batch $Y=\{\bm{y}_j\}_{j=1}^m\stackrel{i.i.d.}{\to}\mathbb{Q}$, we test $\mathbb{P}=\mathbb{Q}$ by checking whether the query scores $\{f(\bm{y}_j)\}_{j=1}^m$ are statistically consistent with the reference score distribution induced by $\bm{x}\sim\mathbb{P}$.

\subsection{Families of RDRs}\label{sec:RDRS}
We employ a set of complementary RDRs that capture different aspects of $\mathbb{P}$, including (i) RDRs for global structure and (ii) RDRs for local neighborhood structure. All RDRs below are learned exclusively from $X^{\mathrm{tr}}$. Furthermore, one feature is explicitly designed to guarantee consistency: for any alternative $\mathbb{P}\neq\mathbb{Q}$, the resulting two-sample test has power converging to one as the sample sizes grow, meaning that the probability of correctly rejecting the null hypothesis $H_0:\mathbb{P}=\mathbb{Q}$ approaches one.

\textbf{Consistent ME-RDR for Local Similarity Patterns}. Inspired by \citet{Jitkrittum:Szabo:Chwialkowski:Gretton2016, Zhou:Ni:Yao:Gao2023}, we introduce the \emph{Mean Embedding (ME)-RDR} as a simple and consistent representation. Let $\kappa$ be a characteristic kernel on $\mathcal{X}$ and let $\{\bm{v}_\ell\}_{\ell=1}^L\subset X^{\mathrm{tr}}$ be a set of test locations selected from the reference sample. For each test location $\bm{v}_\ell$, define
\[
f_\ell(\bm{x})=\kappa(\bm{x},\bm{v}_\ell),\qquad \ell=1,\ldots,L\ .
\]
Each $f_\ell$ measures the similarity between $\bm{x}$ and a fixed test location, thus capturing local deviations through how the query points align with $\mathbb{P}$ around $\bm{v}_\ell$.

\begin{remark}
For any test location $\bm v\in\mathcal X$, define $\mu_{\mathbb P}(\bm v)=\mathbb E_{\bm x\sim\mathbb P}[\kappa(\bm x,\bm v)]$ and $\mu_{\mathbb Q}(\bm v)=\mathbb E_{\bm y\sim\mathbb Q}[\kappa(\bm y,\bm v)]$. If $\kappa$ is characteristic and analytic and the test locations $\{\bm v_\ell\}_{\ell=1}^L$ are drawn i.i.d.\ from a distribution absolutely continuous with respect to the Lebesgue measure, then under any alternative $\mathbb P\neq\mathbb Q$ the ME-RDR family potentially yields a consistent test: as the sample sizes grow, the test power approaches one \citep{Jitkrittum:Szabo:Chwialkowski:Gretton2016, Zhou:Ni:Yao:Gao2023}.
\end{remark}

\textbf{Mahalanobis-RDR for Global Structure}. Global RDRs measure deviations in coarse, distribution-level properties of $\mathbb{P}$. A canonical example is the \emph{Mahalanobis}-RDR
\[
f_{\mathrm{Mah}}(\bm{x})=(\bm{x}-\hat{\bm{\mu}})^\top \hat{\Sigma}^{-1} (\bm{x}-\hat{\bm{\mu}}),
\]
where $\hat{\bm{\mu}}$ and $\hat{\Sigma}$ are the empirical mean and covariance computed from $X^{\mathrm{tr}}$. Such features are effective for detecting global mean and covariance shifts \citep{Mahalanobis1936}.

\textbf{$k$NN-RDR for Local Scale Geometry}. To capture fine-grained changes in local scale and support geometry, we use a nearest-neighbor RDR:
\[
f_{\mathrm{kNN}}(\bm{x})=\sum_{i=1}^k d_{(i)}(\bm{x})/k,
\]
where $d(\cdot,\cdot)$ is a metric (Euclidean in our implementation) and $d_{(1)}(\bm{x})\le \cdots \le d_{(k)}(\bm{x})$ are the ordered distances from $\bm{x}$ to its $k$ nearest neighbors in $X^{\mathrm{tr}}$. This score summarizes the local neighborhood radius around $\bm{x}$, making it sensitive to local sparsity/density changes and manifold- or cluster-type alternatives \citep{Ramaswamy:Rastogi:Shim2000}.

\textbf{LOF-RDR for Local Relative Density}. We also consider a \emph{local outlier factor (LOF)-RDR} based on the LOF score \citep{Breunig:Kriegel:Ng:Sander2000}:
\[
f_{\mathrm{LOF}}(\bm{x})=\mathrm{median}_{\bm{y}\in N_k(\bm{x})} \frac{\mathrm{lrd}(\bm{y})}{\mathrm{lrd}(\bm{x})},
\]
where $\mathrm{lrd}(\bm{x})$ is the local reachability density of $\bm{x}$ and $N_k(\bm{x})$ denotes its $k$ nearest neighbors in $X^{\mathrm{tr}}$. Unlike $k$NN distances, LOF compares the density of $\bm{x}$ to that of its neighbors, capturing local relative density irregularities, i.e., how outlying $\bm{x}$ is with respect to its surrounding region.

\subsection{Aggregation from Multiple RDRs}
Let $\mathcal{F}$ denote the set of all RDRs introduced in Section~\ref{sec:RDRS}. Since different RDRs may operate on different scales, we standardize each RDR using the calibration set $X^{\mathrm{cal}}=\{\bm{x}_i\}_{i=1}^{n_{\mathrm{cal}}}\sim\mathbb{P}$. For each $f\in\mathcal{F}$, we define the calibration mean and variance
\[
\bar f=\sum_{i=1}^{n_{\mathrm{cal}}} f(\bm{x}_i)/n_{\mathrm{cal}},\qquad 
\hat{\sigma}_f^2=\sum_{i=1}^{n_{\mathrm{cal}}}\bigl(f(\bm{x}_i)-\bar f\bigr)^2/(n_{\mathrm{cal}}-1),
\]
and the standardized score
\[
a_f(\bm{x})=(f(\bm{x})-\bar f)/\hat{\sigma}_f\ .
\]
In practice, we add a small regularization term in the denominator when $\hat{\sigma}_f$ is close to zero to avoid numerical issues. This standardization places different RDRs on a comparable scale, so a large value of $a_f(\bm{x})$ indicates that $\bm{x}$ is atypical relative to the reference behavior captured by $f$.

Given a query batch $Y=\{\bm{y}_j\}_{j=1}^m\sim\mathbb{Q}$, we summarize the evidence provided by $f$ via squared mean standardized shift
\[
T_f(Y)=\Big(\sum_{j=1}^m a_f(\bm{y}_j)/m\Big)^2\ .
\]
Intuitively, under the null hypothesis $H_0:\mathbb{P}=\mathbb{Q}$, the query points are drawn from same distribution as the calibration data, so the standardized query scores should fluctuate around zero and their average should be close to zero, making $T_f(Y)$ small. Under alternatives, if $f$ captures a direction along which $\mathbb{Q}$ systematically differs from $\mathbb{P}$, then the query scores tend to be shifted upward on average, and the squared mean shift $T_f(Y)$ becomes noticeably larger.

Finally, we aggregate evidence across all RDRs by summing their contributions:
\[
T(Y)=\sum_{f\in\mathcal{F}} T_f(Y).
\]
Large values of $T(Y)$ indicate that the query batch deviates from the reference distribution in one or more RDR directions, providing evidence against $H_0:\mathbb{P}=\mathbb{Q}$.

\subsection{Testing via Pooled Permutation}
\label{sec_pool_perm}
In the testing phase, we reserve a held-out reference set $X^{\mathrm{hold}}=\{\bm{x}_i\}_{i=1}^{n_{\mathrm{hold}}}\sim\mathbb{P}$ that is independent of $(X^{\mathrm{tr}},X^{\mathrm{cal}})$. Given a query batch $Y=\{\bm y_j\}_{j=1}^m$, we form the pooled sample as follows
\[
U=X^{\mathrm{hold}}\cup Y,
\]
which contains $n_{\mathrm{hold}}+m$ points.

We compute the observed test statistic $T(Y)$ using the standardized scores. To test this statistic under the null, we use a pooled permutation (randomization) procedure: repeatedly draw a subset $S\subset U$ of size $m$ uniformly at random, treat $S$ as a pseudo-query batch, and compute the same statistic $T(S)$. Under $H_0:\mathbb{P}=\mathbb{Q}$, all points in $U$ are i.i.d.\ from the same distribution, so the distinction between \emph{query} and \emph{held-out reference} is exchangeable~\citep{Lehmann:Romano2005,Gretton:Borgwardt:Rasch:Scholkopf:Smola2012}. Therefore, the distribution of $T(S)$ over uniformly random size-$m$ subsets provides a valid calibration for $T(Y)$ as shown in Theorem~\ref{thm:perm}.

In practice, we approximate the permutation threshold using $B$ random subsets. Let $S_1,\ldots,S_B$ be i.i.d.\ uniformly sampled size-$m$ subsets of $U$, and compute $\{T(S_b)\}_{b=1}^B$. Given a significance level $\alpha\in(0,1)$, we define the testing threshold as the empirical $(1-\alpha)$-quantile of the multiset $\{T(S_1),\ldots,T(S_B),T(Y)\}$:
\[
\hat{\tau}_\alpha=\inf\left\{t\in\mathbb{R}:\frac{1}{B+1}\sum_{b=1}^{B+1}\mathbf{1}\!\left\{T(S_b)\le t\right\}\ge 1-\alpha\right\},
\]
where we set $S_{B+1}=Y$. The resulting level-$\alpha$ test is
\[
\phi(X,Y)=\mathbf{1}\left\{T(Y)>\hat{\tau}_\alpha\right\}\ .
\]

\section{Selection Based on Uncertainty Weighting}
\label{sec:uncertainty}

The performance of a two-sample test is governed by a relative comparison between two quantities: \textbf{the discrepancy signal} and \textbf{the testing threshold}. The discrepancy signal is the observed value of the test statistic on the query batch, which becomes larger when $\mathbb{P}$ and $\mathbb{Q}$ differ more in the direction probed by the statistic. The testing threshold is the $(1-\alpha)$-quantile of the statistic under $H_0:\mathbb{P}=\mathbb{Q}$, obtained from its null (permutation) distribution; it is larger when the statistic fluctuates more under $H_0$ (i.e., has higher null variability). Intuitively, larger discrepancy increases the statistic, whereas higher null variability inflates the threshold, making rejection harder. Therefore, test power improves when the statistic exhibits a strong discrepancy signal while remaining stable under the null.

Our framework deliberately avoids train-test splitting and does not learn representations to explicitly amplify the (unknown) discrepancy between $\mathbb{P}$ and $\mathbb{Q}$. As a consequence, we cannot directly tune the test statistic to increase \textbf{the discrepancy signal} under alternatives. What we \emph{can} control is \textbf{the testing threshold}: since the RDRs and their standardization are learned from the reference data, the null behavior of the aggregated statistic can be assessed using only reference samples via pooled permutation, and the threshold is largely driven by how much the statistic fluctuates when we form pseudo-query batches from $\mathbb{P}$. This motivates an uncertainty-aware aggregation strategy: we use resampling on the reference sample to quantify the stability of each RDR, and then weight or select RDRs to downweight unstable ones. This directly reduces the null variability of the aggregated statistic, leading to tighter permutation thresholds and improved power under sample-size asymmetry.

To quantify the uncertainty of a RDR $f\in\mathcal{F}$, we measure how much its batch mean fluctuates when we resample from the calibration set $X^{\mathrm{cal}}=\{\bm{x}_i\}_{i=1}^{n_{\mathrm{cal}}}\sim\mathbb{P}$. Concretely, draw random subsets $\mathcal{S}_1,\ldots,\mathcal{S}_R\subset X^{\mathrm{cal}}$ of a size $m$, and define
\[
\bar{a}_f^r=\sum_{\bm{x}\in\mathcal{S}_r} a_f(\bm{x})/|\mathcal{S}_r|,\qquad r=1,\ldots,R,
\]
where $a_f(\cdot)$ is the standardized score defined in Section~\ref{sec:RDRS}. We estimate the uncertainty of $f$ by the empirical variance
\[
\widehat{\sigma}_{a_f}^2=\sum_{r=1}^R\Big(\bar{a}_f^r-\frac{1}{R}\sum_{r'=1}^R \bar{a}_f^r\Big)^2/(R-1).
\]
Inspired by \citet{Kendall:Gal:Cipolla2018}, this quantity serves as a reference-based stability measure: a larger $\widehat{\sigma}_{a_f}^2$ indicates that the RDR mean is sensitive to small perturbations of the reference sample, and such unstable RDRs tend to inflate the variability of the aggregated test statistic.

\textbf{Uncertainty Alone Is Insufficient.}
Uncertainty measures stability, but it does not measure usefulness for detection. For example, a degenerate RDR that outputs an (almost) constant score has nearly zero variance and would be favored by variance-based weighting, yet it carries no information about distributional differences. Therefore, weighting RDRs only by inverse uncertainty may overemphasize stable but uninformative representations and reduce power.

\textbf{Sensitivity-Aware Uncertainty Weighting.}
To address this, we couple stability with a simple notion of sensitivity. The key idea is that a useful RDR should be stable when evaluated on resampled reference data, but should also respond to small, controlled departures from the reference distribution. Specifically, let $\widetilde{X}^{\mathrm{cal}}$ be a mildly perturbed version of the calibration set, constructed by applying a small perturbation to the reference samples. For each $f\in\mathcal{F}$, we define a sensitivity score
\[
\widehat{\delta}_{a_f}
=
\Big|
\sum_{\bm{y}\in \widetilde{X}^{\mathrm{cal}}} a_f(\bm{y})
-
\sum_{\bm{x}\in X^{\mathrm{cal}}} a_f(\bm{x})
\Big|/n_{\mathrm{cal}},
\]
which quantifies how much the standardized scores change under a small perturbation. RDRs with $\widehat{\delta}_{a_f}$ close to zero are largely insensitive and are unlikely to contribute power.

\textbf{Combined Weighting Scheme.}
We combine stability and sensitivity by setting
\[
w_f=\widehat{\delta}_{a_f}/\widehat{\sigma}_{a_f}^2\ ,
\]
and throughout we assume $\widehat{\delta}_{a_f}>0$ and $\widehat{\sigma}_{a_f}^2>0$ so that the weight is well-defined. 

This weight favors RDRs that are both stable (small $\widehat{\sigma}_{a_f}^2$) and responsive (large $\widehat{\delta}_{a_f}$). Finally, we aggregate evidence across all RDRs by weighting their contributions:
\[
T_{\mathrm{weight}}(Y)=\sum_{f\in\mathcal{F}} w_f\, T_f(Y).
\]
Downweighting unstable RDRs reduces the null variability of the aggregated statistic and leads to tighter permutation thresholds, while incorporating sensitivity prevents trivial but stable RDRs from dominating. Together, these choices stabilize permutation calibration and can improve power in finite-sample, imbalanced regimes, while using only reference data and retaining validity under $H_0$.

\section{Theoretical Analysis}
\label{sec:theory}

We first show that the proposed pooled-permutation procedure is valid under the null hypothesis, providing finite-sample type I error control.
\begin{theorem}\label{thm:perm}
Let $X^{\mathrm{hold}}\sim\mathbb{P}^{n_{\mathrm{hold}}}$ and $Y\sim\mathbb{Q}^m$, independent of $(X^{\mathrm{tr}},X^{\mathrm{cal}})$. Then, under null $H_0:\mathbb{P}=\mathbb{Q}$, the permutation test controls the type I error at level $\alpha$.
\end{theorem}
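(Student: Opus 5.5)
We prove Theorem~\ref{thm:perm} with the standard exchangeability argument for randomization tests with Monte Carlo subsets, carried out conditionally on the reference-only objects.

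\textbf{Step 1: conditioning.} Let $\mathcal{D}=(X^{\mathrm{tr}},X^{\mathrm{cal}})$ together with any auxiliary randomness used to build the RDRs (test locations, perturbed set $\widetilde X^{\mathrm{cal}}$, resampled subsets $\mathcal S_r$). Everything that defines the statistic is a measurable function of $\mathcal{D}$ only: the family $\mathcal F$, the standardizations $(\bar f,\hat\sigma_f)$, and the weights $w_f$. Hence, conditional on $\mathcal{D}$, the map $S\mapsto T(S)$ (or $T_{\mathrm{weight}}(S)$) is a fixed, deterministic function of an $m$-point multiset. Because $(X^{\mathrm{hold}},Y)$ is independent of $\mathcal{D}$, under $H_0$ the pooled sample $U=(u_1,\dots,u_{n_{\mathrm{hold}}+m})$ remains i.i.d.\ from $\mathbb P$ conditionally on $\mathcal D$. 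Its joint law is therefore invariant under every permutation of indices.

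\textbf{Step 2: exchangeability of the statistics.} Write $N=n_{\mathrm{hold}}+m$. Represent the observed query as the index set $I_{B+1}=\{n_{\mathrm{hold}}+1,\dots,N\}$. Represent each Monte Carlo subset $S_b$ as $I_b=\pi_b(I_{B+1})$, where $\pi_1,\dots,\pi_B$ are i.i.d.\ uniform permutations of $[N]$ drawn independently of $U$. Now replace the data by $U'=U\circ\pi_0$ with $\pi_0$ uniform. By invariance, $U'$ has the same law as $U$. Set $\sigma_b=\pi_0^{-1}\pi_b$ for $b\le B$ and $\sigma_{B+1}=\pi_0^{-1}$. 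Then $\sigma_1,\dots,\sigma_{B+1}$ are i.i.d.\ uniform permutations, independent of $U'$.

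It follows that the vector $\bigl(T(S_1),\dots,T(S_B),T(Y)\bigr)$ has the same law as $\bigl(T(U'_{\sigma_1(I)}),\dots,T(U'_{\sigma_{B+1}(I)})\bigr)$, where $I=I_{B+1}$. Conditionally on $U'$, the latter is an i.i.d.\ sequence, so it is exchangeable. Hence the $B+1$ values are exchangeable, both conditionally on $\mathcal D$ and unconditionally.

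\textbf{Step 3: quantile bound.} For an exchangeable vector $(T_1,\dots,T_{B+1})$, let $\hat\tau_\alpha$ be the empirical $(1-\alpha)$-quantile as defined in Section~\ref{sec_pool_perm}. By definition, at most $\lfloor \alpha(B+1)\rfloor$ of the entries strictly exceed $\hat\tau_\alpha$, whatever ties occur. Hence
\[
\sum_{b=1}^{B+1}\mathbf 1\{T_b>\hat\tau_\alpha\}\le \alpha(B+1).
\]
The quantile is a symmetric function of the multiset, so exchangeability gives $\Pr(T_{B+1}>\hat\tau_\alpha)=\Pr(T_b>\hat\tau_\alpha)$ for every $b$. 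Averaging over $b$ yields
\[
\Pr(\phi=1)=\frac{1}{B+1}\,\mathbb E\sum_b \mathbf 1\{T_b>\hat\tau_\alpha\}\le\alpha
\]
conditionally on $\mathcal D$. Integrating over $\mathcal D$ gives the unconditional bound.

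The main obstacle is Step 2. The observed batch $Y$ is not itself a uniformly random subset; it is a fixed index block. The argument therefore needs the random relabeling $\pi_0$ to make $T(Y)$ exchangeable with the $T(S_b)$, rather than merely identically distributed with each of them. Step 2 also relies on the weights $w_f$ and the standardizations depending on no point of $U$. This is exactly where the reference-only design, with $X^{\mathrm{hold}}$ disjoint from $X^{\mathrm{cal}}$ and $X^{\mathrm{tr}}$, is used.
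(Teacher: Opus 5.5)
Your proof is correct and follows essentially the same route as the paper. Both arguments condition on the reference-only objects so that $S\mapsto T(S)$ (or $T_{\mathrm{weight}}(S)$) is fixed, use exchangeability of the pooled sample under $H_0$, and introduce an extra uniform relabeling ($\pi_0$ for you, $\bm{\Pi}$ in the paper) so that $T(Y)$ is exchangeable with the $B$ Monte Carlo replicates. Your final step, which bounds the number of strict exceedances of $\hat\tau_\alpha$ by $\lfloor\alpha(B+1)\rfloor$ and then averages over coordinates by symmetry, handles ties more cleanly than the paper's uniform-rank argument, and your statement that the $\sigma_b$ are i.i.d.\ uniform given $U'$ is the precise form of the paper's conditioning on $(Z,\bm{\Pi})$.
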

We next establish consistency under alternative hypothesis.
\begin{theorem}\label{thm:consistency}
Fix $\alpha\in(0,1)$ and $B\ge 1$ such that $1/(B+1)<\alpha$.
Let $X^{\mathrm{hold}}\sim\mathbb{P}^{n_{\mathrm{hold}}}$ and $Y\sim\mathbb{Q}^m$, independent of $(X^{\mathrm{tr}},X^{\mathrm{cal}})$.
Let $n_{\mathrm{hold}}=n_{\mathrm{hold},m}$ with $m\to\infty$ and assume $n_{\mathrm{hold},m}/m\to\rho\in(0,\infty)$.
Let $\{w_f\}_{f\in\mathcal F}$ be positive weights constructed from $(X^{\mathrm{tr}},X^{\mathrm{cal}})$ (including $w_f=1$, for which $T_{\mathrm{weight}}(Y)=T(Y)$).
Assume $\mathbb E[a_f(X)^2]<\infty$ and $\mathbb E[a_f(Y)^2]<\infty$ for all $f\in\mathcal F$. If $n_{\mathrm{hold},m}\neq o(m)$ and there exists $f\in\mathcal F$ that is consistent, then the pooled permutation test with aggregated statistic is consistent, i.e., $\mathbb P\bigl(T_{\mathrm{weight}}(Y)>\hat\tau_\alpha\bigr)\to 1$ as $m\to\infty$.
\end{theorem}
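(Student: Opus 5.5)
We condition on $(X^{\mathrm{tr}},X^{\mathrm{cal}})$ throughout. Conditionally, the RDRs $f$, the standardizations $a_f$, and the weights $w_f$ are all fixed deterministic functions, and $X^{\mathrm{hold}}$ and $Y$ remain independent i.i.d.\ samples. We read ``$f$ is consistent'' as meaning that $\mu_{\mathbb Q}:=\mathbb E_{\mathbb Q}[a_f(\bm y)]\neq \mathbb E_{\mathbb P}[a_f(\bm x)]=:\mu_{\mathbb P}$. After conditioning, the claim reduces to a single pointwise statement, and dominated convergence then lifts it to the unconditional probability.

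\textbf{Step 1 (signal).} By the weak law of large numbers, which needs only the finite second moments, $\bar a_f(Y):=\frac1m\sum_j a_f(\bm y_j)\to\mu_{\mathbb Q}(f)$ in probability for every $f\in\mathcal F$. Since $\mathcal F$ is finite, $T_{\mathrm{weight}}(Y)\to \sum_f w_f\mu_{\mathbb Q}(f)^2$ in probability. All weights are nonnegative, so for the consistent $f^\star$ we get, with probability tending to one,
\[
T_{\mathrm{weight}}(Y)\ge w_{f^\star}\,\mu_{\mathbb Q}(f^\star)^2/2 .
\]

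\textbf{Step 2 (threshold).} We need an upper bound on $T(S)$ for a random size-$m$ subset $S$ of $U$. The pooled sample $U$ has empirical distribution close to the mixture $\pi=\frac{\rho}{1+\rho}\mathbb P+\frac{1}{1+\rho}\mathbb Q$; here the assumption $n_{\mathrm{hold}}\asymp m$ is exactly what makes $\pi\neq\mathbb Q$. Conditionally on $U$, a uniformly random size-$m$ subset is a sample without replacement. Its mean therefore has expectation $\bar a_f(U)$ and variance at most
\[
\frac{1}{m}\widehat{\mathrm{Var}}_U(a_f)\cdot\frac{N-m}{N-1},\qquad N=n_{\mathrm{hold}}+m .
\]
The empirical variance is $O_P(1)$ by the second-moment assumptions, so this variance is $O_P(1/m)$. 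By Chebyshev, $\bar a_f(S)-\bar a_f(U)=o_P(1)$, uniformly over the finite family $\mathcal F$. Also $\bar a_f(U)\to \mu_\pi(f)=\frac{\rho\mu_{\mathbb P}+\mu_{\mathbb Q}}{1+\rho}$. Hence each permuted statistic satisfies $T_{\mathrm{weight}}(S_b)\to c_\pi:=\sum_f w_f\mu_\pi(f)^2$ in probability.

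\textbf{Step 3 (comparison).} We must show $c_\pi<\sum_f w_f\mu_{\mathbb Q}(f)^2$, and this is the main obstacle. The standardization uses $X^{\mathrm{cal}}\sim\mathbb P$, so $\mu_{\mathbb P}(f)=\mathbb E[(f-\bar f)/\hat\sigma_f]$ is only approximately zero. Write $\mu_{\mathbb Q}=\mu_{\mathbb P}+\Delta$. Then $\mu_\pi=\mu_{\mathbb P}+\Delta/(1+\rho)$, which is a strict contraction toward $\mu_{\mathbb P}$ along the same direction. This yields $\mu_\pi^2<\mu_{\mathbb Q}^2$ exactly when $\mu_{\mathbb P}$ is small relative to $\Delta$.

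I would handle this in one of two ways.
\begin{enumerate}[label=(\roman*)]
\item Let $n_{\mathrm{cal}}\to\infty$, so that $\mu_{\mathbb P}(f)\to0$ almost surely. Then $\mu_\pi^2=\mu_{\mathbb Q}^2/(1+\rho)^2<\mu_{\mathbb Q}^2$ for every $f$ with $\mu_{\mathbb Q}(f)\ne0$, and the two sides are equal for the remaining $f$.
\item Read the definition of ``consistent $f$'' as already including this separation, i.e.\ $|\mu_{\mathbb Q}(f)|>|\mu_\pi(f)|$.
\end{enumerate}
Under either reading, the gap $\varepsilon:=\sum_f w_f(\mu_{\mathbb Q}^2-\mu_\pi^2)$ is strictly positive. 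The contribution of $f^\star$ is strictly positive, and every other term is $\ge 0$ in case (i); in case (ii) I would require the separation for all $f$, or for the aggregate as a whole.

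\textbf{Step 4 (conclusion).} Because $B$ is fixed, a union bound over the $B$ permuted statistics gives
\[
\max_{b\le B}T_{\mathrm{weight}}(S_b)\le c_\pi+\varepsilon/3\quad\text{w.p.}\to1,\qquad T_{\mathrm{weight}}(Y)\ge c_\pi+2\varepsilon/3\quad\text{w.p.}\to1 .
\]
On this event, $T(Y)$ is the strict maximum of the $B+1$ values. We have $1/(B+1)<\alpha$, so the empirical $(1-\alpha)$-quantile $\hat\tau_\alpha$ is attained by one of the values with rank at most $B$, i.e.\ by some $T(S_b)$ with $b\le B$. Therefore $T(Y)>\hat\tau_\alpha$. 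Finally, integrating the conditional probability over $(X^{\mathrm{tr}},X^{\mathrm{cal}})$ by dominated convergence gives unconditional power tending to one.
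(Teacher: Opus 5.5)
Your proposal is correct and takes essentially the paper's route: condition on $(X^{\mathrm{tr}},X^{\mathrm{cal}})$, get the limit of the query statistic by the LLN, show each permuted statistic concentrates at a strictly smaller mixture value, and finish with a union bound over the $B$ replicates and $1/(B+1)<\alpha$. The one technical difference is that the paper obtains the mixture value $q^2C$, with $q=1/(1+\rho)$, via a hypergeometric split of $S$ rather than your without-replacement variance bound. Your option (i) is what the paper does implicitly---its proof concludes ``as $m\to\infty$ and $n_{\mathrm{cal}}\to\infty$''---so you are right that with a fixed calibration set, where $\mathbb E_{\mathbb P}[a_f]\neq 0$, the comparison can fail, and the statement's limit in $m$ alone needs this extra limit.
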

Together, Theorems~\ref{thm:perm}-\ref{thm:consistency} show that our test is both valid under $H_0$ and consistent under $H_1$.

\begin{table*}[t]
\centering
\footnotesize
\renewcommand{\arraystretch}{1.15}
\caption{
Test power (mean$_{\pm}$std) on BLOB and CIFAR10-RES18 with fixed $M$ and varying sample size $N$. Standard deviation is reported without a leading zero. Best results are bolded. $\Delta$ reports the absolute difference between \textsc{LOTTERY} and the second-best method in each column
(\texttt{+}: outperform, \texttt{-}: underperform). The upper section of baselines are kernel-based methods, and the lower section of baselines are learning-based methods.
}
\resizebox{\textwidth}{!}{
\begin{tabular}{lccccccc|ccccccc}
\toprule
 & \multicolumn{7}{c|}{\textbf{BLOB ($M=50$)}} 
 & \multicolumn{7}{c}{\textbf{CIFAR10-RES18 ($M=4$)}} \\
\cmidrule(lr){2-8} \cmidrule(lr){9-15}

\textbf{Method}
& $N=100$ & $N=150$ & $N=200$ & $N=300$ & $N=400$ & $N=500$ & $N=1000$
& $N=40$ & $N=50$ & $N=60$ & $N=70$ & $N=80$ & $N=90$ & $N=100$ \\
\midrule

MMD-FUSE
& $\mathbf{0.122}_{\pm .011}$ & $\mathbf{0.146}_{\pm .011}$ & $\mathbf{0.162}_{\pm .011}$ & $0.138_{\pm .011}$
& $0.163_{\pm .009}$ & $0.153_{\pm .010}$ & $0.145_{\pm .008}$
& $0.280_{\pm .015}$ & $0.282_{\pm .011}$ & $0.290_{\pm .014}$
& $0.303_{\pm .013}$ & $0.302_{\pm .015}$ & $0.288_{\pm .015}$ & $0.303_{\pm .017}$ \\

MMDAgg
& $0.060_{\pm .007}$ & $0.053_{\pm .003}$ & $0.077_{\pm .010}$ & $0.046_{\pm .005}$
& $0.068_{\pm .009}$ & $0.056_{\pm .008}$ & $0.046_{\pm .006}$
& $0.283_{\pm .013}$ & $0.291_{\pm .012}$ & $0.287_{\pm .011}$
& $0.285_{\pm .012}$ & $0.307_{\pm .009}$ & $0.292_{\pm .012}$ & $0.317_{\pm .019}$ \\

MMD-M
& $0.059_{\pm .007}$ & $0.064_{\pm .006}$ & $0.064_{\pm .006}$ & $0.059_{\pm .006}$
& $0.054_{\pm .008}$ & $0.056_{\pm .007}$ & $0.056_{\pm .006}$
& $0.483_{\pm .017}$ & $0.501_{\pm .008}$ & $0.504_{\pm .013}$
& $0.508_{\pm .014}$ & $0.521_{\pm .014}$ & $0.521_{\pm .011}$ & $0.531_{\pm .023}$ \\

\midrule

MMD-Deep
& $0.076_{\pm .007}$ & $0.078_{\pm .010}$ & $0.096_{\pm .012}$ & $0.079_{\pm .009}$
& $0.087_{\pm .012}$ & $0.066_{\pm .007}$ & $0.073_{\pm .010}$
& $0.200_{\pm .016}$ & $0.224_{\pm .023}$ & $0.216_{\pm .015}$
& $0.209_{\pm .024}$ & $0.215_{\pm .020}$ & $0.241_{\pm .029}$ & $0.263_{\pm .020}$ \\

RL-TST
& $0.070_{\pm .008}$ & $0.113_{\pm .014}$ & $0.105_{\pm .012}$ & $0.109_{\pm .017}$
& $0.178_{\pm .021}$ & $0.193_{\pm .025}$ & $0.180_{\pm .023}$
& $0.070_{\pm .018}$ & $0.060_{\pm .017}$ & $0.078_{\pm .011}$
& $0.060_{\pm .015}$ & $0.091_{\pm .016}$ & $0.150_{\pm .025}$ & $0.189_{\pm .014}$ \\

C2ST-L
& $0.072_{\pm .015}$ & $0.073_{\pm .014}$ & $0.074_{\pm .008}$ & $0.087_{\pm .021}$
& $0.070_{\pm .008}$ & $0.128_{\pm .030}$ & $0.048_{\pm .010}$
& $0.076_{\pm .014}$ & $0.058_{\pm .008}$ & $0.086_{\pm .017}$
& $0.081_{\pm .018}$ & $0.187_{\pm .023}$ & $0.201_{\pm .026}$ & $0.132_{\pm .012}$ \\

\midrule
\rowcolor{gray!15}
\textbf{IT}
& $0.082_{\pm .006}$ & $0.139_{\pm .013}$ & $0.157_{\pm .007}$ & $\mathbf{0.237}_{\pm .013}$
& $\mathbf{0.414}_{\pm .018}$ & $\mathbf{0.579}_{\pm .014}$ & $\mathbf{0.725}_{\pm .015}$
& $\mathbf{0.789}_{\pm .013}$ & $\mathbf{0.822}_{\pm .010}$ & $\mathbf{0.874}_{\pm .006}$
& $\mathbf{0.893}_{\pm .010}$ & $\mathbf{0.917}_{\pm .006}$ & $\mathbf{0.891}_{\pm .007}$ & $\mathbf{0.932}_{\pm .005}$ \\

\rowcolor{gray!5}
\textbf{$\Delta$ (vs.\ 2nd best)}
& $-0.040$ & $-0.007$ & $-0.005$ & $+0.099$
& $+0.236$ & $+0.386$ & $+0.545$
& $+0.306$ & $+0.321$ & $+0.370$
& $+0.385$ & $+0.396$ & $+0.370$ & $+0.401$ \\
\bottomrule
\end{tabular}
}
\label{tab:tab1}
\end{table*}

\section{Experiments}
\label{sec:experiments}
In this section, we empirically evaluate the proposed method under a range of controlled and real-world settings. Our experiments are designed to answer three core questions: (i) whether the proposed approach is effective in the \emph{extreme asymmetric setting} where the query sample size is severely limited; (ii) whether it exhibits \emph{consistent power improvement} as the number of query samples increases; and (iii) whether the proposed uncertainty-based selection mechanism contributes meaningfully to stability and power.

\subsection{Datasets}
We consider both synthetic benchmarks commonly used in two-sample testing and realistic large-scale datasets that reflect practical detection scenarios. The details of datasets generation and shared embedding extraction can be found in the Appendix~\ref{sec:extra_exp}

\textbf{Synthetic Data.} We use the standard BLOB dataset \citep{Gretton:Borgwardt:Rasch:Scholkopf:Smola2012, Liu:Xu:Lu:Zhang:Gretton:Sutherland2020, Kubler:Jitkrittum:Scholkopf:Muandet2022, Schrab:Kim:Albert:Laurent:Guedj:Gretton2023}, which consists of mixtures of Gaussian components arranged on a grid. This dataset is widely adopted in the two-sample testing literature to evaluate sensitivity to local density and structural changes. We follow the standard experimental protocol used in prior work, varying the separation and covariance structure between distributions while controlling sample sizes.

\textbf{Physical Higgs Boson.} The Higgs dataset is a high-dimensional tabular dataset originating from particle physics \citep{Baldi:Sadowski:Whiteson2014}, where the task is to distinguish simulated signal events from background events. Following prior two-sample testing studies, we treat the background distribution as the reference and test against shifted or reweighted signal distributions. This dataset reflects realistic challenges such as heterogeneous features and moderate dimensionality.

\textbf{Adversarial Detection on CIFAR-10.} To evaluate performance in high-dimensional image settings, we consider adversarial detection on CIFAR-10~\citep{Krizhevsky:Hinton2009}. Clean images are treated as reference samples, while adversarially perturbed images form the query distribution. We consider multiple threat models and architectures, including ResNet-18~\citep{He:Zhang:Ren:Sun2016} and WideResNet-28~\citep{Zagoruyko:Komodakis2016}, and adopt commonly used adversarial attack methods, in particular projected gradient descent (PGD)~\citep{Madry:Makelov:Schmidt:Tsipras:Vlachos2018} with adversarial budget $\epsilon=4/255$. Embedding representations are extracted from the penultimate (global average pooled) layer of the threat model, and all two-sample testing methods operate on the same embedding space for fair comparison. 


\subsection{Baselines}
We compare the proposed method against representative state-of-the-art non-parametric two-sample testing methods, including both kernel-based and learning-based approaches:


\begin{itemize}[leftmargin=*, itemsep=2pt, topsep=2pt]
\item \textbf{C2ST-L}~\citep{Lopez-Paz:Oquab2017, Kim:Ramdas:Singh:Wasserman2021}:
classifier two-sample test (C2ST) based on training a binary classifier
and using its logits as a proxy for MMD.

\item \textbf{MMD-M}~\citep{Garreau:Jitkrittum:Kanagawa2017}:
MMD with a single Gaussian kernel using the median heuristic
for bandwidth.

\item \textbf{MMD-FUSE}~\citep{Biggs:Schrab:Gretton2023} and
\textbf{MMDAgg}~\citep{Schrab:Kim:Albert:Laurent:Guedj:Gretton2023}:
multi-kernel aggregation methods designed to capture information
across various kernel choices while preserving type I error control
without data splitting.

\item \textbf{MMD-Deep}~\citep{Liu:Xu:Lu:Zhang:Gretton:Sutherland2020} and
\textbf{RL-TST}~\citep{Tian:Peng:Zhou:Gong:Gretton:Liu2025}:
deep-kernel MMD with representation learning on a train set
and testing on a held-out set. The representation can either
be supervised or self-supervised learned.
\end{itemize}

All baselines are carefully tuned following the recommendations in their original papers, and permutation testing is used whenever applicable to ensure valid type I error control.

\subsection{Results \& Analysis}
We evaluate test performance primarily in terms of empirical power at a fixed significance level $\alpha = 0.05$, averaged over $10$ independent trials, in each trial, the two samples will be drawn $100$ times and each time will report a null hypothesis rejection or not. Moreover, we conduct the type I error check with $\alpha$ ranging from $0.05$ to even $0.01$, showing the control and validity of type I error. In the main content, we display the results of different experimental settings on dataset BLOB and CIFAR10-ResNet18, all the rest of the results can be found in Appendix~\ref{sec:extra_exp}.

\textbf{Varying the reference size $N$ (few-shot power).}
We investigate how effectively \textsc{Lottery} exploits an increasing number of reference samples when the query size $M$ is fixed and small.
Results are summarized in Table~\ref{tab:tab1}.
In the low-$N$ regime, where the reference set is not abundant enough compared to the query size $M$, \textsc{Lottery} can slightly underperform the strongest kernel baselines, reflecting the limited information available for learning a RDR.
However, as $N$ increases, \textsc{Lottery} exhibits a markedly faster power growth than all baselines.
On BLOB ($M{=}50$), \textsc{Lottery} overtakes every competing method from $N\ge300$ onward and continues to widen the margin as more reference data become available.
In contrast, kernel-based tests and split-based learning baselines show much slower improvements with increasing $N$, indicating limited scalability with respect to the reference size.
On CIFAR10-RES18 ($M{=}4$), \textsc{Lottery} achieves consistently high power across all $N$ and substantially outperforms both kernel and learning-based baselines, even when the reference set is small.
These results highlight that while \textsc{Lottery} may not dominate in extremely low-reference regimes, it benefits disproportionately from additional reference samples, making it particularly effective in few-query, abundant-reference settings.

\begin{table*}[t]
\centering
\caption{Test power (mean$_{\pm}$std) on BLOB and CIFAR10-RES18 with fixed $N$ and varying sample size $M$. Standard deviation is reported without a leading zero. Best results are bolded. $\Delta$ reports the absolute difference between \textsc{LOTTERY} and the second-best method in each column (\texttt{+}: outperform, \texttt{-}: underperform). The upper section of baselines are kernel-based methods, and the lower section of baselines are learning-based methods. MMD-Deep cannot work when $M=2$.}
\footnotesize
\renewcommand{\arraystretch}{1.15}
\resizebox{\textwidth}{!}{
\begin{tabular}{lccccccc|ccccccc}
\toprule
 & \multicolumn{7}{c|}{\textbf{BLOB ($N=4000$)}} 
 & \multicolumn{7}{c}{\textbf{CIFAR10-RES18 ($N=100$)}} \\
\cmidrule(lr){2-8} \cmidrule(lr){9-15}

\textbf{Method}
& $M=20$ & $M=50$ & $M=80$ & $M=100$ & $M=150$ & $M=200$ & $M=300$
& $M=2$ & $M=4$ & $M=6$ & $M=8$ & $M=10$ & $M=12$ & $M=14$ \\
\midrule

MMD-FUSE
& $0.056_{\pm .008}$
& $0.120_{\pm .011}$
& $0.269_{\pm .010}$
& $0.414_{\pm .009}$
& $0.776_{\pm .012}$
& $0.919_{\pm .009}$
& $\mathbf{1.000}_{\pm .000}$
& $0.086_{\pm .006}$
& $0.304_{\pm .009}$
& $0.488_{\pm .014}$
& $0.723_{\pm .017}$
& $0.853_{\pm .012}$
& $0.923_{\pm .007}$
& $0.975_{\pm .004}$ \\

MMDAgg
& $0.025_{\pm .006}$
& $0.065_{\pm .008}$
& $0.121_{\pm .013}$
& $0.177_{\pm .013}$
& $0.381_{\pm .013}$
& $0.483_{\pm .016}$
& $0.598_{\pm .015}$
& $0.097_{\pm .007}$
& $0.305_{\pm .011}$
& $0.405_{\pm .014}$
& $0.549_{\pm .018}$
& $0.576_{\pm .016}$
& $0.612_{\pm .013}$
& $0.668_{\pm .010}$ \\

MMD-M
& $0.049_{\pm .009}$
& $0.058_{\pm .006}$
& $0.055_{\pm .006}$
& $0.064_{\pm .005}$
& $0.058_{\pm .006}$
& $0.054_{\pm .004}$
& $0.071_{\pm .007}$
& $0.146_{\pm .012}$
& $0.519_{\pm .013}$
& $0.713_{\pm .015}$
& $0.868_{\pm .007}$
& $0.945_{\pm .008}$
& $0.974_{\pm .004}$
& $0.991_{\pm .003}$ \\

\midrule

MMD-Deep
& $0.081_{\pm .007}$
& $0.087_{\pm .008}$
& $0.087_{\pm .015}$
& $0.134_{\pm .023}$
& $0.316_{\pm .040}$
& $0.350_{\pm .052}$
& $0.622_{\pm .086}$
& $-_{\pm .-}$
& $0.206_{\pm .017}$
& $0.251_{\pm .041}$
& $0.527_{\pm .030}$
& $0.561_{\pm .031}$
& $0.608_{\pm .035}$
& $0.662_{\pm .036}$ \\

RL-TST
& $0.118_{\pm .013}$
& $0.226_{\pm .025}$
& $0.326_{\pm .036}$
& $0.067_{\pm .015}$
& $0.251_{\pm .077}$
& $0.453_{\pm .086}$
& $0.742_{\pm .079}$
& $0.105_{\pm .011}$
& $0.102_{\pm .014}$
& $0.171_{\pm .032}$
& $0.406_{\pm .035}$
& $0.430_{\pm .044}$
& $0.479_{\pm .038}$
& $0.568_{\pm .033}$ \\

C2ST-L
& $0.064_{\pm .007}$
& $0.085_{\pm .019}$
& $0.046_{\pm .006}$
& $0.055_{\pm .007}$
& $0.042_{\pm .009}$
& $0.065_{\pm .006}$
& $0.055_{\pm .007}$
& $0.061_{\pm .010}$
& $0.068_{\pm .013}$
& $0.091_{\pm .017}$
& $0.328_{\pm .022}$
& $0.286_{\pm .053}$
& $0.423_{\pm .064}$
& $0.458_{\pm .045}$ \\

\midrule
\rowcolor{gray!15}
\textbf{LOTTERY}
& $\mathbf{0.524}_{\pm .013}$
& $\mathbf{0.824}_{\pm .008}$
& $\mathbf{0.900}_{\pm .008}$
& $\mathbf{0.939}_{\pm .010}$
& $\mathbf{0.993}_{\pm .002}$
& $\mathbf{0.999}_{\pm .001}$
& $\mathbf{1.000}_{\pm .000}$
& $\mathbf{0.810}_{\pm .011}$
& $\mathbf{0.966}_{\pm .004}$
& $\mathbf{0.991}_{\pm .002}$
& $\mathbf{0.998}_{\pm .002}$
& $\mathbf{1.000}_{\pm .000}$
& $\mathbf{1.000}_{\pm .000}$
& $\mathbf{1.000}_{\pm .000}$ \\

\rowcolor{gray!5}
\textbf{$\Delta$ (vs. 2nd best)}
& $+0.406$ & $+0.598$ & $+0.574$ & $+0.525$ & $+0.217$ & $+0.080$ & $+0.000$
& $+0.664$ & $+0.447$ & $+0.278$ & $+0.130$ & $+0.055$ & $+0.026$ & $+0.009$ \\

\bottomrule
\end{tabular}
}
\label{tab:vary_m}
\vskip -0.5em
\end{table*}

\begin{table*}[t]
\centering
\small
\caption{
Ablation of \textsc{Lottery} selection.
Entries report test power as mean$_{\pm}$std, and $\Delta$ denotes the absolute gain from selection.
The seven configurations $m_1$--$m_7$ follow the same choices of $M$ as in Table~\ref{tab:vary_m}.
$N=4000$ for BLOB and $N=100$ for CIFAR10.
}
\vspace{4pt}

\setlength{\tabcolsep}{2.6pt}
\renewcommand{\arraystretch}{1.12}

\begin{tabular}{llccccccc}
\toprule
\textbf{Dataset} & \textbf{Variant}
& $m_1$ & $m_2$ & $m_3$ & $m_4$ & $m_5$ & $m_6$ & $m_7$ \\
\midrule

\multirow{3}{*}{\textbf{BLOB}}
& LOTTERY
& $0.524_{\pm .013}$ & $0.824_{\pm .008}$ & $0.900_{\pm .008}$ & $0.939_{\pm .010}$
& $0.993_{\pm .002}$ & $0.999_{\pm .001}$ & $1.000_{\pm .000}$ \\
& w/o sel.
& $0.411_{\pm .011}$ & $0.661_{\pm .010}$ & $0.780_{\pm .012}$ & $0.836_{\pm .008}$
& $0.935_{\pm .003}$ & $0.979_{\pm .005}$ & $0.995_{\pm .003}$ \\
& $\Delta$
& $+0.113$ & $+0.163$ & $+0.120$ & $+0.103$ & $+0.058$ & $+0.020$ & $+0.005$ \\
\midrule

\multirow{3}{*}{\textbf{CIFAR10}}
& LOTTERY
& $0.810_{\pm .011}$ & $0.966_{\pm .004}$ & $0.991_{\pm .002}$ & $0.998_{\pm .002}$
& $1.000_{\pm .000}$ & $1.000_{\pm .000}$ & $1.000_{\pm .000}$ \\
& w/o sel.
& $0.768_{\pm .012}$ & $0.916_{\pm .006}$ & $0.968_{\pm .006}$ & $0.987_{\pm .003}$
& $0.994_{\pm .002}$ & $0.997_{\pm .001}$ & $0.999_{\pm .001}$ \\
& $\Delta$
& $+0.042$ & $+0.050$ & $+0.023$ & $+0.011$ & $+0.006$ & $+0.003$ & $+0.001$ \\
\bottomrule
\end{tabular}

\label{tab:ablation_lottery_selection}
\vskip -1.5em
\end{table*}

\textbf{Varying the query size $M$ (empirical consistency).}
We next examine how test power evolves as the number of query samples increases while keeping the reference set fixed ($N{=}4000$ for BLOB and $N{=}100$ for CIFAR10-RES18); see Table~\ref{tab:vary_m}.
This setting represents an extreme asymmetry regime, where only a handful of query samples are available against a large reference pool.
Across both datasets, \textsc{Lottery} consistently achieves the highest power for all $M$, with particularly pronounced advantages in the most few-shot regimes.
For instance, on CIFAR10-RES18 with only $M{=}2$ queries, \textsc{Lottery} already attains strong detection performance, whereas kernel-based tests and split-based learning baselines remain close to chance or are not applicable.
As $M$ increases, the performance gap gradually narrows as competing methods benefit from additional query information; nevertheless, \textsc{Lottery} maintains a clear lead in the low-to-moderate $M$ range and rapidly approaches near-perfect power.
This behavior indicates that \textsc{Lottery} is especially effective under severe query scarcity, while remaining empirically consistent and converging to 1.

\vspace{-0.05em}
\textbf{Ablation Study.}
Moreover, we conduct an ablation study to assess the contribution of the proposed \emph{uncertainty-based selection and weighting} mechanism. Table~\ref{tab:ablation_lottery_selection} isolates the contribution of our uncertainty-guided selection. Selection yields consistent gains across all $m$ values, with the largest improvements when $m$ is smallest (e.g., $+0.113$ on BLOB at $M{=}20$ and $+0.050$ on CIFAR10-RES18 at $M{=}4$). This supports our motivation in Section~\ref{sec:uncertainty}: downweighting unstable RDRs tightens the threshold and improves power.

\vspace{-0.05em}
\textbf{Complementarity of RDR families.}
We further evaluate why aggregating multiple RDR families is useful. In this experiment, we construct four synthetic alternatives with the same reference distribution $P=\mathcal{N}(0,I_d)$, where $d=10$. The alternatives are designed to emphasize different types of distributional changes, including global location shifts, global scale changes, heterogeneous coordinate-wise variance changes, and localized point contamination. Table~\ref{tab:rdr-complementarity-main} reports a compact comparison at $m=50$, with the full data generation details and results over $m\in\{20,50,100,200\}$ provided in Appendix~\ref{app:rdr-complementarity}. No single RDR is uniformly best: Mahalanobis-RDR performs strongly on global mean and variance changes, whereas LOF-RDR becomes competitive on heterogeneous variance changes. This confirms that the RDR families capture non-redundant aspects of the reference distribution. By aggregating these complementary signals, LOTTERY outperforms every individual RDR across all alternatives, supporting the motivation for using multiple reference-dependent representations.

\begin{table}[t]
\centering
\caption{
Complementarity of individual RDR families. We report test power
at $\alpha=0.05$ with $N=4000$ reference samples and $M=50$ query samples. Bold numbers indicate the best individual RDR in each row. Full results over all query sizes and how the alternative synthetic data generated are provided in
Appendix~\ref{app:rdr-complementarity}.
}
\label{tab:rdr-complementarity-main}
\small
\setlength{\tabcolsep}{4pt}
\begin{tabular}{lcccc}
\toprule
Alternative & LOF & kNN & Mahalanobis & LOTTERY \\
\midrule
\texttt{mean\_shift}     & .278 & .288 & \textbf{.320} & \textbf{.442} \\
\texttt{variance\_scale} & .301 & .299 & \textbf{.338} & \textbf{.444} \\
\texttt{skew\_variance}  & \textbf{.447} & .403 & .444 & \textbf{.536} \\
\texttt{point\_contam}   & .203 & .219 & \textbf{.270} & \textbf{.317} \\
\bottomrule
\end{tabular}
\vspace{-2em}
\end{table}



\textbf{Type I Error Check.} Figure 1 reports the empirical type I error of \textsc{Lottery} across datasets and $M$ settings, where $m_1$–$m_7$ follow the same choices of $M$ as in Table \ref{tab:vary_m}.
Across all datasets, the type I error stays close to the significance level $\alpha=0.05$.
This demonstrates stable false-positive control under different configurations.
\begin{figure}[t]
    \centering
    \includegraphics[width=0.8\linewidth]{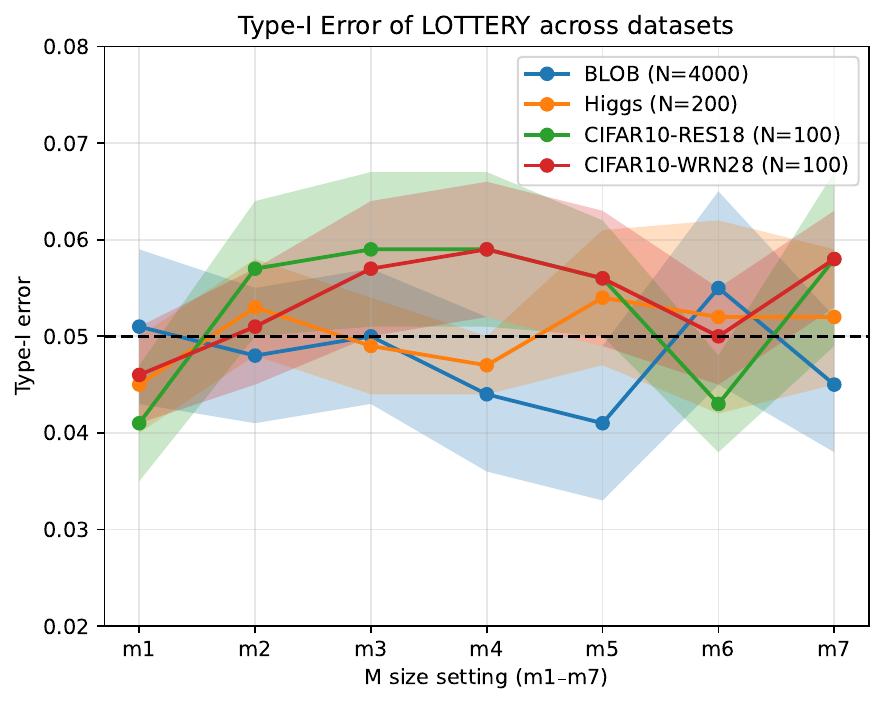}
    \vspace{-1em}
    \caption{
    Type I error of the \textsc{Lottery} test across different datasets and $M$.
    The horizontal axis corresponds to seven configurations of the test statistic (denoted as $m_1$–$m_7$),
    while the vertical axis reports the type I error.
    Solid curves show the mean type I error over repeated trials, and shaded regions indicate standard deviation.
    Results are shown for BLOB ($N{=}4000$), Higgs ($N{=}200$), CIFAR10-RES18 ($N{=}100$), and CIFAR10-WRN28 ($N{=}100$).
    The black dashed line marks the significance level $\alpha{=}0.05$.
    }
    \vspace{-1.5em}
    \label{fig:type_i}
\end{figure}

\subsection{Discussion}
The empirical results highlight three main insights. First, LOTTERY is most effective in the regime it is designed for: severe sample-size asymmetry with few query samples and abundant reference data. In this setting, split-based learning methods suffer from unstable representation learning and reduced testing power, whereas LOTTERY uses the query batch only for testing and exploits the reference sample for representation construction. Second, the gains of LOTTERY come not only from using one-sided scores, but from converting a family of reference-dependent scores into a valid two-sample test with finite-sample type I error control. This distinguishes our setting from standard one-class classification or anomaly detection, where scores are often useful for ranking unusual samples but do not by themselves provide level-$\alpha$ two-sample testing guarantees. Third, different RDR families capture different aspects of the reference distribution. The complementarity results show that no single RDR is uniformly best across alternatives, while the uncertainty-guided aggregation improves power by combining stable and informative signals. These observations suggest that reference-only learning provides a practical and principled direction for future two-sample tests under extreme class imbalance.

\vspace{-0.5em}
\section{Conclusion}
We studied data-adaptive two-sample testing under severe sample-size asymmetry, where abundant reference data are available but only a few query samples are observed.
We proposed \textsc{Lottery}, a reference-only testing framework that learns multiple reference-dependent representations, aggregates their evidence via uncertainty-guided selection, and calibrates the test using pooled permutation, thereby avoiding query data splitting while preserving nonparametric validity.
We established finite-sample type I error control under $H_0$ and consistency under $H_1$ when the representation family contains at least one consistent component.
Empirically, \textsc{Lottery} consistently outperforms kernel-based and learning-based baselines, with the largest gains in the most few-shot regimes, and ablations confirm the critical role of uncertainty-guided selection.
These results demonstrate that reference-only learning is a principled and effective approach for two-sample testing under extreme imbalance.

\section*{Acknowledgements}
This research was supported by The University of Melbourne’s Research Computing Services and the Petascale Campus Initiative. XYT is supported by Maincode research program, the Melbourne Research Scholarship and the ARC with grant number DE240101089. ZJZ is supported by the Melbourne Research Scholarship and the ARC with grant number DE240101089. LHP is supported by the ARC with grant number LP240100101. FL is supported by the ARC with grant number DE240101089, LP240100101, DP230101540 and the NSF\&CSIRO Responsible AI program with grant number 2303037.

\section*{Impact Statement}

This work studies data-adaptive two-sample testing under severe sample-size asymmetry, a setting that arises naturally in applications such as anomaly detection, adversarial example detection, and monitoring distributional shifts in deployed machine-learning systems.
By enabling valid and powerful testing when only a handful of query samples are available, the proposed method may help practitioners detect harmful or unexpected changes more reliably, potentially improving the safety and robustness of data-driven systems.

The proposed methodology is statistical in nature and does not introduce new mechanisms for data collection, surveillance, or decision-making about individuals.
As with other hypothesis-testing tools, misuse may arise if results are interpreted without consideration of assumptions, uncertainty, or domain context.
We therefore emphasize that \textsc{Lottery} is intended as a diagnostic component within a broader analytical pipeline rather than a standalone decision rule.

Overall, we believe this work has the potential for positive impact by strengthening principled distribution-shift detection in high-stakes settings, while posing minimal risk when applied responsibly.

\bibliography{example_paper}
\bibliographystyle{icml2026}

\newpage
\appendix
\onecolumn
\section{Proofs of Theoretical Analysis}\label{app:proof}
\subsection{Detailed Proofs of Theorem~\ref{thm:perm}}\label{app:perm_proof}
We begin with a useful definition as follows.
\begin{definition}\label{Def:invariant}\citep{Hemerik:Goeman2018}
Let $Z$ be the sample taking values in the instance space $\mathcal{X}$. Let $\mathcal{G}$ be a finite set of transformations $g: \mathcal{X}\rightarrow\mathcal{X}$, such that $\mathcal{G}$ is a group with respect to the operation of composition of transformations. Let $\mathcal{H}_0$ be any null hypothesis which implies that the joint distribution of the test statistics $T(gZ)$, $g\in\mathcal{G}$, is invariant under all transformations in $\mathcal{G}$ of $Z$. Denote by $R$ the cardinality of the set $\mathcal{G}$ and write $\mathcal{G}=\{g_1,...,g_{R}\}$. We have, under $\mathcal{H}_0$,
\[
\left(T(g_1Z),...,T(g_{R}Z)\right)\stackrel{d}{=}\left(T(g\cdot g_1Z),...,T(g\cdot g_{R}Z)\right) \ \ \ \text{for all}\ g\in\mathcal{G}\ ,
\]
where $\stackrel{d}{=}$ denotes equality in distribution.
\end{definition}
We present the proofs of Theorem~\ref{thm:perm} as follows.

\begin{proof}
Let $X^{\mathrm{hold}}=\{\bm{x}_i\}_{i=1}^{n_{\mathrm{hold}}}$ and $Y=\{\bm{y}_j\}_{j=1}^m$, and form the pooled sample
\[
U=X^{\mathrm{hold}}\cup Y
\]
with $N=n_{\mathrm{hold}}+m$ points. Fix an arbitrary ordering of $U$ as
\[
Z=(Z_1,\ldots,Z_N)=(\bm{x}_1,\ldots,\bm{x}_{n_{\mathrm{hold}}},\bm{y}_1,\ldots,\bm{y}_m).
\]

We emphasize that the proof applies to both the unweighted statistic $T(Y)=\sum_{f\in\mathcal F}T_f(Y)$ and the weighted statistic $T_{\mathrm{weight}}(Y)=\sum_{f\in\mathcal F}w_fT_f(Y)$. Indeed, once $(X^{\mathrm{tr}},X^{\mathrm{cal}})$ are fixed, the standardization parameters and the weights $\{w_f\}$ are fixed, and the rule that maps any candidate batch $S$ to a scalar value $T(S)$ is deterministic. Since $X^{\mathrm{hold}}$ and $Y$ are independent of $(X^{\mathrm{tr}},X^{\mathrm{cal}})$, this deterministic rule does not depend on how the pooled points are labeled as held-out reference versus query.

Under $H_0:\mathbb{P}=\mathbb{Q}$, we have $Z_1,\ldots,Z_N\stackrel{i.i.d.}{\sim}\mathbb{P}$, hence $Z$ is exchangeable with respect to the permutation group $\bm{\Pi}_N$. In the pooled permutation procedure, each replicate is obtained by drawing a size-$m$ subset $S\subset U$ uniformly at random and computing $T(S)$. Equivalently, draw a uniform random permutation $\bm{\Pi}\sim\mathrm{Unif}(\bm{\Pi}_N)$ and let $S(\bm{\Pi})$ be the last $m$ elements of $\bm{\Pi}Z$; then $S(\bm{\Pi})$ is a uniformly random size-$m$ subset of $U$, and the replicate statistic can be written as $T(S(\bm{\Pi}))$.

Let $\bm{\Pi}_{(1)},\ldots,\bm{\Pi}_{(B)}\overset{i.i.d.}{\sim}\mathrm{Unif}(\bm{\Pi}_N)$ be independent of $Z$, define $S_b=S(\bm{\Pi}_{(b)})$, and write
\[
T_{B+1}=T(Y),\qquad T_b=T(S_b),\quad b\in[B].
\]
Let $T_{(1)}\le \cdots \le T_{(B+1)}$ be the order statistics of the multiset $\{T_1,T_2,\ldots,T_{B+1}\}$. Define the empirical $(1-\alpha)$-quantile threshold by
\[
\hat{\tau}_\alpha = T_{(k)},\qquad k=\left\lceil(1-\alpha)(B+1)\right\rceil,
\]
and reject $H_0$ if $T_0>\hat{\tau}_\alpha$.

To show type I error control, introduce an additional independent $\bm{\Pi}\sim\mathrm{Unif}(\bm{\Pi}_N)$. By exchangeability of $Z$ under $H_0$ and the group property of $\bm{\Pi}_N$, we have
\[
\bigl(T(S(\mathrm{Id})),T(S(\bm{\Pi}_{(1)})),\ldots,T(S(\bm{\Pi}_{(B)}))\bigr)
\stackrel{d}{=}
\bigl(T(S(\bm{\Pi})),T(S(\bm{\Pi}\bm{\Pi}_{(1)})),\ldots,T(S(\bm{\Pi}\bm{\Pi}_{(B)}))\bigr),
\]
where $\mathrm{Id}$ is the identity permutation. Conditionally on $(Z,\bm{\Pi})$, the permutations $\bm{\Pi},\bm{\Pi}\bm{\Pi}_{(1)},\ldots,\bm{\Pi}\bm{\Pi}_{(B)}$ are i.i.d.\ uniform on $\bm{\Pi}_N$, hence the $B+1$ statistics on the right-hand side are conditionally exchangeable. Therefore, conditional on $(Z,\bm{\Pi})$, the rank of the first coordinate among these $B+1$ values is uniform on $\{1,\ldots,B+1\}$ when there are no ties; with ties, the rule $T_0>\hat{\tau}_\alpha$ is conservative. In particular,
\[
\Pr\bigl(T_0>\hat{\tau}_\alpha\bigr)\le \alpha.
\]
Thus rejecting $H_0$ when $T(Y)>\hat{\tau}_\alpha$ controls the type I error at level $\alpha$.
\end{proof}

\subsection{Detailed Proofs of Theorem~\ref{thm:consistency}}\label{app:consistency_proof}

We present the proof of Theorem~\ref{thm:consistency} as follows.
\begin{proof}
Condition on $(X^{\mathrm{tr}},X^{\mathrm{cal}})$ (and on $\widetilde X^{\mathrm{cal}}$ if used). Then $\{a_f\}_{f\in\mathcal F}$ and $\{w_f\}_{f\in\mathcal F}$ are fixed, $\{w_f\}_{f\in\mathcal F}$ are positive, and they are independent of $(X^{\mathrm{hold}},Y)$.

Fix any $f\in\mathcal F$. Let $\mu_{P,f}=\mathbb E_{\bm{x}\sim\mathbb P}[f(\bm{x})]$ and $\sigma_{P,f}^2=\mathrm{Var}_{\bm{x}\sim\mathbb P}(f(\bm{x}))$ with $\bm{x}\sim\mathbb P$, and let $\mu_{Q,f}=\mathbb E_{\bm{y}\sim\mathbb Q}[f(\bm{y})]$ with $\bm{y}\sim\mathbb Q$. Since $\bar f$ and $\hat\sigma_f$ are computed from the i.i.d.\ calibration sample $X^{\mathrm{cal}}\sim\mathbb P^{n_{\mathrm{cal}}}$, by the law of large numbers,
\[
\bar f \to \mu_{P,f}\quad\text{and}\quad \hat\sigma_f \to \sigma_{P,f}\qquad\text{in probability as }n_{\mathrm{cal}}\to\infty\ .
\]
For simplicity, we take $\epsilon=0$ in the standardization; the same argument applies when $\epsilon>0$ by replacing $\sigma_{P,f}$ with $\sigma_{P,f}+\epsilon$.

Hence, for an independent draw $\bm{z}$ with $\mathbb E[f(\bm{z})^2]<\infty$,
\[
a_f(\bm{z})=\frac{f(\bm{z})-\bar f}{\hat\sigma_f}\to \frac{f(\bm{z})-\mu_{P,f}}{\sigma_{P,f}}
\qquad\text{in probability as }n_{\mathrm{cal}}\to\infty.
\]
In particular, still conditioning on $(X^{\mathrm{tr}},X^{\mathrm{cal}})$, we have
\[
\mathbb E_{\bm{x}\sim\mathbb P}[a_f(\bm{x})]\to 0\quad\text{and}\quad \mathbb E_{\bm{y}\sim\mathbb Q}[a_f(\bm{y})]\to \Delta_f=\frac{\mu_{Q,f}-\mu_{P,f}}{\sigma_{P,f}}
\qquad\text{in probability as }n_{\mathrm{cal}}\to\infty.
\]

Now let $Y=\{\bm y_j\}_{j=1}^m\sim\mathbb Q^m$. By the law of large numbers and the finite second-moment assumption,
\[
\frac{1}{m}\sum_{j=1}^m a_f(\bm y_j)\to \mathbb E_{\bm{y}\sim\mathbb Q}[a_f(\bm{y})]\qquad\text{in probability as }m\to\infty,
\]
and combining with $\mathbb E_{\bm{y}\sim\mathbb Q}[a_f(\bm{y})]\to \Delta_f$ gives
\[
\frac{1}{m}\sum_{j=1}^m a_f(\bm y_j)\to \Delta_f
\qquad\text{in probability as }m\to\infty\text{ and }n_{\mathrm{cal}}\to\infty.
\]
Therefore, for the weighted aggregated statistic,
\[
T_{\mathrm{weight}}(Y)=\sum_{f\in\mathcal F} w_f\left(\frac{1}{m}\sum_{j=1}^m a_f(\bm y_j)\right)^2 \to C=\sum_{f\in\mathcal F} w_f \Delta_f^2
\qquad\text{in probability.}
\]
By assumption, there exists at least one consistent $f\in\mathcal F$, which implies $\Delta_f\neq 0$ under $H_1:\mathbb P\neq\mathbb Q$; since all weights are positive, this yields $C>0$.

Next, form the pooled sample $U=X^{\mathrm{hold}}\cup Y$ with $X^{\mathrm{hold}}\sim\mathbb P^{n_{\mathrm{hold}}}$, and let $S$ be a uniformly random size-$m$ subset of $U$. Let $M=|S\cap Y|$ be the number of points in $S$ coming from $Y$. Then $M$ is hypergeometric and, since $n_{\mathrm{hold},m}/m\to\rho\in(0,\infty)$,
\[
\frac{M}{m}\to q=\frac{m}{m+n_{\mathrm{hold}}}\to \frac{1}{1+\rho}\in(0,1)
\qquad\text{in probability as }m\to\infty.
\]
Conditional on $M$, the subset $S$ contains $M$ points from $Y$ and $m-M$ points from $X^{\mathrm{hold}}$. For each $f\in\mathcal F$,
\[
\frac{1}{m}\sum_{\bm z\in S} a_f(\bm z)
=
\frac{M}{m}\cdot \frac{1}{M}\sum_{\bm y\in S\cap Y} a_f(\bm y)
+
\left(1-\frac{M}{m}\right)\cdot \frac{1}{m-M}\sum_{\bm x\in S\cap X^{\mathrm{hold}}} a_f(\bm x).
\]
As $m\to\infty$, we have $M\to\infty$ and $m-M\to\infty$ in probability, so by the law of large numbers,
\[
\frac{1}{M}\sum_{\bm y\in S\cap Y} a_f(\bm y)\to \mathbb E_{\bm{y}\sim\mathbb Q}[a_f(\bm{y})]
\quad\text{and}\quad
\frac{1}{m-M}\sum_{\bm x\in S\cap X^{\mathrm{hold}}} a_f(\bm x)\to \mathbb E_{\bm{x}\sim\mathbb P}[a_f(X)]
\qquad\text{in probability.}
\]
Combining these limits with $\mathbb E_{\bm{y}\sim\mathbb Q}[a_f(\bm{y})]\to \Delta_f$, $\mathbb E_{\bm{x}\sim\mathbb P}[a_f(\bm{x})]\to 0$ (as $n_{\mathrm{cal}}\to\infty$), and $M/m\to q$ yields
\[
\frac{1}{m}\sum_{\bm z\in S} a_f(\bm z)\to q\,\Delta_f
\qquad\text{in probability as }m\to\infty\text{ and }n_{\mathrm{cal}}\to\infty,
\]
and hence
\[
T_{\mathrm{weight}}(S)=\sum_{f\in\mathcal F} w_f\left(\frac{1}{m}\sum_{\bm z\in S} a_f(\bm z)\right)^2 \to q^2 C
\qquad\text{in probability.}
\]
Since $q\in(0,1)$ and $C>0$, we have $q^2 C<C$. Let $\gamma=(1-q^2)C/4>0$. Then
\[
\mathbb P\bigl(T_{\mathrm{weight}}(Y)\ge C-\gamma\bigr)\to 1
\quad\text{and}\quad
\mathbb P\bigl(T_{\mathrm{weight}}(S)\le q^2C+\gamma\bigr)\to 1,
\]
and for large $m$ we have $q^2C+\gamma<C-\gamma$, so $\mathbb P\bigl(T_{\mathrm{weight}}(S)\ge T_{\mathrm{weight}}(Y)\bigr)\to 0$.

Now let $S_1,\ldots,S_B$ be i.i.d.\ uniformly sampled size-$m$ subsets of $U$ as in Section~\ref{sec_pool_perm}. By a union bound,
\[
\mathbb P\Bigl(\max_{b\in[B]} T_{\mathrm{weight}}(S_b)<T_{\mathrm{weight}}(Y)\Bigr)\to 1.
\]
On this event, $T_{\mathrm{weight}}(Y)$ is the largest element of the multiset $\{T_{\mathrm{weight}}(S_1),\ldots,T_{\mathrm{weight}}(S_B),T_{\mathrm{weight}}(Y)\}$. Since $1/(B+1)<\alpha$, the empirical $(1-\alpha)$-quantile $\hat\tau_\alpha$ is strictly below the maximum, hence $T_{\mathrm{weight}}(Y)>\hat\tau_\alpha$. Therefore,
\[
\mathbb P\bigl(T_{\mathrm{weight}}(Y)>\hat\tau_\alpha\bigr)\to 1
\qquad\text{as }m\to\infty\text{ and }n_{\mathrm{cal}}\to\infty,
\]
which proves consistency.
\end{proof}

\section{Experimental Details}
\label{app:exp_details}

\subsection{Extra Experimental Results}
\label{sec:extra_exp}

This appendix reports additional experimental results (Table \ref{tab:appendix_fixedN_varyM_higgs_wrn8} and Table \ref{tab:appendix_fixedM_varyN_higgs_wrn28}) that complement the main paper and further assess the robustness of \textsc{Lottery} under severe sample-size asymmetry.
Across all supplementary benchmarks, \textsc{Lottery} consistently achieves strong test power in few-query regimes and scales favorably as either the reference size $N$ or the query size $M$ increases.

When $M$ is fixed and $N$ grows, \textsc{Lottery} rapidly approaches near-perfect power on vision benchmarks, while kernel-based baselines improve more slowly.
Conversely, when $N$ is fixed and $M$ varies, \textsc{Lottery} exhibits the largest gains in the most data-scarce settings and converges to the performance of strong learning-based baselines as $M$ becomes moderately large.
On Higgs, many learning-based methods saturate to power $1.0$, whereas on CIFAR10-WRN28 variants, \textsc{Lottery} provides clear advantages in high-dimensional, few-shot settings.

These results further confirm that reference-only learning with uncertainty-guided aggregation is particularly effective under extreme imbalance.

\begin{table*}[t]
\centering
\footnotesize
\renewcommand{\arraystretch}{1.12}

\caption{
Additional experiments with fixed reference size $N$ and varying query size $M$.
Entries report test power as mean$_{\pm}$std (standard deviation without a leading zero).
Best results in each column are bolded.
$\Delta$ reports the difference between LOTTERY and the second-best method
($+$: outperform, $-$: underperform).
Results are shown for Higgs ($N{=}200$) and CIFAR10-WRN8 ($N{=}100$).
MMD-FUSE, MMDAgg and MMD-M cannot work when $M=3$ in Higgs.
}
\vspace{4pt}

\resizebox{\textwidth}{!}{
\begin{tabular}{lccccccc|ccccccc}
\toprule
 & \multicolumn{7}{c|}{\textbf{Higgs ($N=200$)}} 
 & \multicolumn{7}{c}{\textbf{CIFAR10-WRN8 ($N=100$)}} \\
\cmidrule(lr){2-8} \cmidrule(lr){9-15}

\textbf{Method}
& $M=2$ & $M=3$ & $M=4$ & $M=5$ & $M=6$ & $M=7$ & $M=8$
& $M=2$ & $M=4$ & $M=6$ & $M=8$ & $M=10$ & $M=12$ & $M=14$ \\
\midrule

MMD-FUSE
& $0.392_{\pm .014}$ & $-\_{\pm -}$ & $\mathbf{1.000}_{\pm .000}$ & $\mathbf{1.000}_{\pm .000}$ & $\mathbf{1.000}_{\pm .000}$ & $\mathbf{1.000}_{\pm .000}$ & $\mathbf{1.000}_{\pm .000}$
& $0.071_{\pm .009}$ & $0.102_{\pm .013}$ & $0.149_{\pm .013}$ & $0.156_{\pm .015}$ & $0.209_{\pm .015}$ & $0.267_{\pm .010}$ & $0.322_{\pm .014}$ \\

MMDAgg
& $0.607_{\pm .009}$ & $-\_{\pm -}$ & $0.663_{\pm .010}$ & $0.703_{\pm .016}$ & $0.696_{\pm .014}$ & $0.774_{\pm .014}$ & $0.869_{\pm .009}$
& $0.090_{\pm .009}$ & $0.085_{\pm .011}$ & $0.127_{\pm .013}$ & $0.101_{\pm .007}$ & $0.142_{\pm .014}$ & $0.168_{\pm .009}$ & $0.204_{\pm .008}$ \\

MMD-M
& $0.442_{\pm .012}$ & $-\_{\pm -}$ & $0.095_{\pm .006}$ & $0.781_{\pm .010}$ & $0.921_{\pm .010}$ & $\mathbf{1.000}_{\pm .000}$ & $\mathbf{1.000}_{\pm .000}$
& $0.105_{\pm .010}$ & $0.161_{\pm .016}$ & $0.244_{\pm .017}$ & $0.251_{\pm .011}$ & $0.319_{\pm .017}$ & $0.417_{\pm .014}$ & $0.406_{\pm .012}$ \\

\midrule

MMD-Deep
& $\mathbf{1.000}_{\pm .000}$ & $\mathbf{1.000}_{\pm .000}$ & $\mathbf{1.000}_{\pm .000}$ & $\mathbf{1.000}_{\pm .000}$ & $\mathbf{1.000}_{\pm .000}$ & $\mathbf{1.000}_{\pm .000}$ & $\mathbf{1.000}_{\pm .000}$
& $\mathbf{1.000}_{\pm .000}$ & $0.142_{\pm .015}$ & $0.109_{\pm .015}$ & $0.160_{\pm .031}$ & $0.174_{\pm .031}$ & $0.269_{\pm .038}$ & $0.298_{\pm .039}$ \\

RL-TST
& $\mathbf{1.000}_{\pm .000}$ & $\mathbf{1.000}_{\pm .000}$ & $\mathbf{1.000}_{\pm .000}$ & $\mathbf{1.000}_{\pm .000}$ & $\mathbf{1.000}_{\pm .000}$ & $\mathbf{1.000}_{\pm .000}$ & $\mathbf{1.000}_{\pm .000}$
& $0.047_{\pm .009}$ & $0.027_{\pm .007}$ & $0.120_{\pm .024}$ & $0.149_{\pm .032}$ & $0.200_{\pm .029}$ & $0.302_{\pm .033}$ & $0.299_{\pm .038}$ \\

C2ST-L
& $\mathbf{1.000}_{\pm .000}$ & $\mathbf{1.000}_{\pm .000}$ & $\mathbf{1.000}_{\pm .000}$ & $\mathbf{1.000}_{\pm .000}$ & $\mathbf{1.000}_{\pm .000}$ & $\mathbf{1.000}_{\pm .000}$ & $\mathbf{1.000}_{\pm .000}$
& $0.073_{\pm .015}$ & $0.064_{\pm .012}$ & $0.184_{\pm .028}$ & $0.244_{\pm .034}$ & $0.355_{\pm .052}$ & $0.083_{\pm .035}$ & $0.087_{\pm .032}$ \\

\midrule
\rowcolor{gray!15}
\textbf{LOTTERY}
& $\mathbf{1.000}_{\pm .000}$ & $\mathbf{1.000}_{\pm .000}$ & $\mathbf{1.000}_{\pm .000}$ & $\mathbf{1.000}_{\pm .000}$ & $\mathbf{1.000}_{\pm .000}$ & $\mathbf{1.000}_{\pm .000}$ & $\mathbf{1.000}_{\pm .000}$
& $0.510_{\pm .011}$ & $\mathbf{0.687}_{\pm .016}$ & $\mathbf{0.808}_{\pm .009}$ & $\mathbf{0.836}_{\pm .011}$ & $\mathbf{0.891}_{\pm .010}$ & $\mathbf{0.929}_{\pm .007}$ & $\mathbf{0.958}_{\pm .008}$ \\

$\Delta$ (vs.\ 2nd best)
& $+0.000$ & $+0.000$ & $+0.000$ & $+0.000$ & $+0.000$ & $+0.000$ & $+0.000$
& $-0.490$ & $+0.526$ & $+0.564$ & $+0.585$ & $+0.536$ & $+0.512$ & $+0.552$ \\

\bottomrule
\end{tabular}
}
\label{tab:appendix_fixedN_varyM_higgs_wrn8}
\end{table*}

\begin{table*}[t]
\centering
\footnotesize
\renewcommand{\arraystretch}{1.12}

\caption{
Additional experiments with fixed query size $M$ and varying reference size $N$.
Entries report test power as mean$_{\pm}$std (standard deviation without a leading zero).
Results are shown for Higgs with $M=2$ and CIFAR10-WRN28 with $M=4$.
}
\vspace{4pt}

\resizebox{\textwidth}{!}{
\begin{tabular}{lccccccc|ccccccc}
\toprule
 & \multicolumn{7}{c|}{\textbf{Higgs ($M=2$)}} 
 & \multicolumn{7}{c}{\textbf{CIFAR10-WRN28 ($M=4$)}} \\
\cmidrule(lr){2-8} \cmidrule(lr){9-15}

\textbf{Method}
& $N=50$ & $N=80$ & $N=100$ & $N=120$ & $N=150$ & $N=180$ & $N=200$
& $N=40$ & $N=50$ & $N=60$ & $N=70$ & $N=80$ & $N=90$ & $N=100$ \\
\midrule

MMD-FUSE
& $0.302_{\pm .011}$ & $0.294_{\pm .013}$ & $0.386_{\pm .013}$ & $0.415_{\pm .010}$ & $0.370_{\pm .017}$ & $0.387_{\pm .021}$ & $0.413_{\pm .012}$
& $0.091_{\pm .010}$ & $0.112_{\pm .012}$ & $0.097_{\pm .009}$ & $0.115_{\pm .011}$ & $0.102_{\pm .009}$ & $0.101_{\pm .009}$ & $0.106_{\pm .007}$ \\

C2ST-L
& $1.000_{\pm .000}$ & $1.000_{\pm .000}$ & $1.000_{\pm .000}$ & $1.000_{\pm .000}$ & $1.000_{\pm .000}$ & $1.000_{\pm .000}$ & $1.000_{\pm .000}$
& $0.044_{\pm .012}$ & $0.058_{\pm .013}$ & $0.085_{\pm .014}$ & $0.061_{\pm .014}$ & $0.105_{\pm .013}$ & $0.135_{\pm .018}$ & $0.052_{\pm .012}$ \\

MMD-Agg
& $0.399_{\pm .019}$ & $0.518_{\pm .021}$ & $0.625_{\pm .014}$ & $0.594_{\pm .013}$ & $0.611_{\pm .009}$ & $0.580_{\pm .016}$ & $0.615_{\pm .008}$
& $0.083_{\pm .010}$ & $0.122_{\pm .013}$ & $0.089_{\pm .009}$ & $0.090_{\pm .008}$ & $0.093_{\pm .011}$ & $0.089_{\pm .009}$ & $0.080_{\pm .009}$ \\

MMD-M
& $0.039_{\pm .005}$ & $0.159_{\pm .016}$ & $0.295_{\pm .014}$ & $0.442_{\pm .014}$ & $0.314_{\pm .015}$ & $0.436_{\pm .011}$ & $0.453_{\pm .016}$
& $0.148_{\pm .010}$ & $0.183_{\pm .012}$ & $0.160_{\pm .008}$ & $0.191_{\pm .014}$ & $0.179_{\pm .010}$ & $0.171_{\pm .010}$ & $0.163_{\pm .008}$ \\

RL-TST
& $1.000_{\pm .000}$ & $1.000_{\pm .000}$ & $1.000_{\pm .000}$ & $1.000_{\pm .000}$ & $1.000_{\pm .000}$ & $1.000_{\pm .000}$ & $1.000_{\pm .000}$
& $0.042_{\pm .009}$ & $0.030_{\pm .009}$ & $0.056_{\pm .011}$ & $0.045_{\pm .014}$ & $0.055_{\pm .016}$ & $0.084_{\pm .021}$ & $0.031_{\pm .012}$ \\

MMD-Deep
& $1.000_{\pm .000}$ & $1.000_{\pm .000}$ & $1.000_{\pm .000}$ & $1.000_{\pm .000}$ & $1.000_{\pm .000}$ & $1.000_{\pm .000}$ & $1.000_{\pm .000}$
& $0.105_{\pm .013}$ & $0.123_{\pm .027}$ & $0.107_{\pm .012}$ & $0.127_{\pm .017}$ & $0.117_{\pm .016}$ & $0.109_{\pm .015}$ & $0.144_{\pm .022}$ \\

\midrule
\rowcolor{gray!15}
\textbf{LOTTERY}
& $1.000_{\pm .000}$ & $1.000_{\pm .000}$ & $1.000_{\pm .000}$ & $1.000_{\pm .000}$ & $1.000_{\pm .000}$ & $1.000_{\pm .000}$ & $1.000_{\pm .000}$
& $0.517_{\pm .018}$ & $0.525_{\pm .009}$ & $0.647_{\pm .007}$ & $0.646_{\pm .017}$ & $0.656_{\pm .014}$ & $0.661_{\pm .011}$ & $0.669_{\pm .023}$ \\

\bottomrule
\end{tabular}
}
\label{tab:appendix_fixedM_varyN_higgs_wrn28}
\end{table*}

\subsection{Complementarity of Individual RDR Families} \label{app:rdr-complementarity} 
To further examine the motivation for using multiple RDR families, we construct four synthetic alternatives with the same reference distribution $P=\mathcal{N}(0,I_d)$, where $d=10$. Unless otherwise stated, the perturbation strength is set to $s=2.0$. The alternatives are designed to emphasize different types of distributional changes. In the \texttt{mean\_shift} setting, the query distribution is $Q=\mathcal{N}(\delta e_1,I_d)$ with $\delta=1.0$, targeting global location changes. In the \texttt{variance\_scale} setting, $Q=\mathcal{N}(0,(1+\epsilon)I_d)$ with $\epsilon=0.10$, targeting global scale changes. In the \texttt{skew\_variance} setting, $Q=\mathcal{N}(0,\mathrm{diag}(s_1,\ldots,s_d))$, where the first three coordinates have variance $2.0$, the next three have variance $0.4$, and the remaining coordinates have variance $1.0$, producing heterogeneous coordinate-wise changes. In the \texttt{point\_contam} setting, $Q=(1-\epsilon)\mathcal{N}(0,I_d)+\epsilon \mathcal{N}(4e_1,0.01I_d)$ with $\epsilon=0.15$, creating a small cluster of localized outliers. 

Table~\ref{tab:rdr-complementarity-full} reports the test power of each individual RDR and LOTTERY across query sizes $m\in\{20,50,100,200\}$. The best individual RDR varies across alternatives and query sizes, confirming that the RDR families capture non-redundant signals. LOTTERY consistently outperforms every individual RDR by aggregating these complementary signals.

\begin{table}[t] 
\centering 
\caption{ Complementarity of individual RDR families. We report test power at $\alpha=0.05$ over 10 independent trials with $N=4000$ reference samples and $200$ permutations per test. Bold numbers indicate the best individual RDR in each row. Bold LOTTERY indicates that the aggregated test exceeds all individual RDRs. } 
\label{tab:rdr-complementarity-full} 
\small 
\setlength{\tabcolsep}{4pt} 
\begin{tabular}{llcccc} 
\toprule 
Alternative & $m$ & LOF & kNN & Mahalanobis & LOTTERY \\ \midrule 
\multirow{4}{*}{\texttt{mean\_shift}} & 20 & .197 & .192 & \textbf{.221} & \textbf{.265} \\ & 50 & .278 & .288 & \textbf{.320} & \textbf{.442} \\ & 100 & .563 & .573 & \textbf{.629} & \textbf{.661} \\ & 200 & .717 & .765 & \textbf{.773} & \textbf{.871} \\ 
\midrule 
\multirow{4}{*}{\texttt{variance\_scale}} & 20 & .196 & .197 & \textbf{.217} & \textbf{.268} \\ & 50 & .301 & .299 & \textbf{.338} & \textbf{.444} \\ & 100 & .549 & .569 & \textbf{.607} & \textbf{.656} \\ & 200 & .731 & .735 & \textbf{.779} & \textbf{.865} \\ 
\midrule 
\multirow{4}{*}{\texttt{skew\_variance}} & 20 & \textbf{.313} & .275 & .295 & \textbf{.362} \\ & 50 & \textbf{.447} & .403 & .444 & \textbf{.536} \\ & 100 & \textbf{.727} & .689 & .720 & \textbf{.760} \\ & 200 & \textbf{.860} & .824 & .853 & \textbf{.911} \\ 
\midrule 
\multirow{4}{*}{\texttt{point\_contam}} & 20 & .121 & .103 & \textbf{.122} & \textbf{.167} \\ & 50 & .203 & .219 & \textbf{.270} & \textbf{.317} \\ & 100 & .379 & .416 & \textbf{.547} & \textbf{.563} \\ & 200 & .525 & .621 & \textbf{.793} & \textbf{.825} \\ \bottomrule 
\end{tabular}
\end{table}

\subsection{Implementation Details}
All experiments were conducted on a remote Linux server equipped with an NVIDIA A100 GPU, an 8-core CPU, and 16\,GB of memory.
All methods were implemented in \textbf{Python} using the \textbf{PyTorch} framework.
Kernel-based baselines additionally rely on NumPy and SciPy for distance computations and U-statistics.

Random seeds are fixed across runs, and all results are averaged over multiple independent trials.
The complete implementation, including data generation, training pipelines, and evaluation scripts, is provided in the supplementary materials.

\subsection{Details of Baselines}
We compare \textsc{Lottery} with representative nonparametric two-sample testing baselines, including both kernel-based and learning-based methods.

\paragraph{MMD-M} \citep{Garreau:Jitkrittum:Kanagawa2017}
Maximum Mean Discrepancy with a single Gaussian kernel, using the median heuristic for bandwidth selection.\\
\textbf{Code:} \url{https://github.com/yeager20001118/AdapTesting}

\paragraph{MMD-FUSE} \citep{Biggs:Schrab:Gretton2023}
A multi-kernel MMD test that adaptively fuses a family of kernels while preserving finite-sample type I error control without data splitting.\\
\textbf{Code:} \url{https://github.com/antoninschrab/mmdfuse}

\paragraph{MMDAgg} \citep{Schrab:Kim:Albert:Laurent:Guedj:Gretton2023}
Aggregates multiple MMD statistics via a calibrated aggregation rule to improve robustness across kernel choices.\\
\textbf{Code:} \url{https://github.com/antoninschrab/mmdagg}

\paragraph{C2ST-L} \citep{Lopez-Paz:Oquab2017, Kim:Ramdas:Singh:Wasserman2021}
Classifier two-sample test based on training a binary classifier and using its logits as the test statistic.\\
\textbf{Code:} \url{https://github.com/yeager20001118/AdapTesting}

\paragraph{MMD-Deep} \citep{Liu:Xu:Lu:Zhang:Gretton:Sutherland2020}
Deep-kernel MMD with representation learning on a training split and testing on a held-out set.\\
\textbf{Code:} \url{https://github.com/fengliu90/DK-for-TST}

\paragraph{RL-TST} \citep{Tian:Peng:Zhou:Gong:Gretton:Liu2025}
A reinforcement-learning-based two-sample test that optimizes representations to maximize test power under a train--test split.\\
\textbf{Code:} \url{https://github.com/yeager20001118/A-Unified-Data-Representation-Learning-for-Non-parametric-Two-sample-Testing}

Unless otherwise specified, all baselines are tuned following their original papers.

\subsection{Details of Datasets Generation}
\textbf{BLOB (synthetic).}
We follow the BLOB data-generation protocol used in prior two-sample testing work \citep{Gretton:Borgwardt:Rasch:Scholkopf:Smola2012,Liu:Xu:Lu:Zhang:Gretton:Sutherland2020}.
In brief, each sample is drawn from a mixture of Gaussians arranged on a grid in $\mathbb{R}^2$ (the reference distribution), while the query distribution is obtained by perturbing the mixture structure (e.g., by shifting a subset of components and/or modifying their covariances).
This benchmark is designed to stress tests that must detect local and non-Gaussian distributional changes.

\textbf{CIFAR-10 adversarial detection.}
We treat clean CIFAR-10 images as reference samples and adversarially perturbed images as query samples.
Given an input image $x$ and label $y$, projected gradient descent (PGD) constructs an adversarial example by iteratively taking gradient steps to increase the classification loss and then projecting back to a constraint set (here an $\ell_\infty$ ball of radius $\epsilon$ around $x$ and the valid pixel range). Concretely,
\[
 x^{(t+1)} = \Pi_{\mathcal B_\infty(x,\epsilon)}\bigl(x^{(t)} + \eta\,\mathrm{sign}(\nabla_{x} \mathcal L(\theta, x^{(t)}, y))\bigr),
\]
where $\Pi$ denotes projection and $\eta$ is the step size.

\textbf{Embedding extractors (ResNet-18 / WRN-28).}
For image-based experiments, we embed each image using a fixed, pretrained classifier and apply all two-sample tests in the resulting feature space.
We consider (i) ResNet-18~\citep{He:Zhang:Ren:Sun2016}, which is a residual network composed of four stages of basic residual blocks (commonly configured as $[2,2,2,2]$ blocks), and (ii) WideResNet-28~\citep{Zagoruyko:Komodakis2016}, a widened residual architecture with depth 28.
Unless stated otherwise, we use the penultimate-layer (global-average-pooled) features as embeddings.

\end{document}